\documentclass{article} 
\usepackage{iclr2025_conference}
\usepackage{times}
\usepackage{microtype}
\usepackage{graphicx}
\usepackage{wrapfig}
\usepackage{lipsum}
\usepackage{enumitem}
\usepackage{subfigure}
\usepackage{booktabs} 
\usepackage{graphicx}
\usepackage{amsmath}
\usepackage{amssymb}
\usepackage{mathtools}
\usepackage{bbm}
\usepackage{scalerel}
\usepackage{graphicx}
\usepackage{subfigure}
\usepackage{caption}
\usepackage{subcaption}
\usepackage{multicol, latexsym, amsmath, amssymb}
\usepackage{blindtext}
\usepackage{subcaption}
\usepackage{natbib}
\usepackage{tikz}
\usepackage{amssymb}
\usepackage{amssymb}
\usepackage{mathtools}
\usepackage{amsthm}
\usepackage{hyperref}
\usepackage{dsfont}
\usepackage{bm}
\usepackage[bb=pazo]{mathalpha}
\usepackage{bm}
\usepackage{amsthm}
\usepackage{subfigure}
\usepackage{tikz}
\usepackage{amssymb}
\usepackage[ruled,longend]{algorithm2e}
\DeclareMathOperator*{\concat}{\scalerel*{\Vert}{\sum}}




\usepackage[utf8]{inputenc} 
\usepackage[T1]{fontenc}    
\usepackage{hyperref}       
\usepackage{url}            
\usepackage{booktabs}       
\usepackage{amsfonts}       
\usepackage{nicefrac}       
\usepackage{microtype}      
\usepackage{xcolor}         

\usepackage{amsmath}
\usepackage{amssymb}
\usepackage{mathtools}
\usepackage{amsthm}

\usepackage{amsthm}

\DeclareMathOperator*{\argmin}{arg\,min}

\usepackage{booktabs}
\usepackage{multirow}

\usepackage[capitalize,noabbrev]{cleveref}

\theoremstyle{plain}
\newtheorem{theorem}{Theorem}

\theoremstyle{definition}
\newtheorem{definition}[theorem]{Definition}

\theoremstyle{remark}


\usepackage{amsmath,amsfonts,bm}









\def\eqref#1{equation~\ref{#1}}









\def\1{\bm{1}}










\DeclareMathAlphabet{\mathsfit}{\encodingdefault}{\sfdefault}{m}{sl}
\SetMathAlphabet{\mathsfit}{bold}{\encodingdefault}{\sfdefault}{bx}{n}













\usepackage{hyperref}
\usepackage{url}

\title{Grassmannian Geometry Meets Dynamic Mode Decomposition in DMD-GEN: A New Metric for Mode Collapse in Time Series Generative Models}






\author{%
  Yassine Abbahaddou  \thanks{Equal contribution.}\\
   LIX, École Polytechnique \\
   Institute Polytechnique de Paris, France  \\
  \texttt{yassine.abbahaddou@polytechnique.edu} \\
   \And
   Amine Mohamed Aboussalah \footnotemark[1] \\
   NYU Tandon School of Engineering \\
    New York, USA \\
  \texttt{ama10288@nyu.edu} \\
}

%

\iclrfinalcopy

\begin{document}

\maketitle

\begin{abstract}
Generative models like Generative Adversarial Networks (GANs) and Variational Autoencoders (VAEs) often fail to capture the full diversity of their training data, leading to mode collapse. While this issue is well-explored in image generation, it remains underinvestigated for time series data. We introduce a new definition of mode collapse specific to time series and propose a novel metric, DMD-GEN, to quantify its severity. Our metric utilizes Dynamic Mode Decomposition (DMD), a data-driven technique for identifying coherent spatiotemporal patterns, and employs Optimal Transport between DMD eigenvectors to assess discrepancies between the underlying dynamics of the original and generated data. This approach not only quantifies the preservation of essential dynamic characteristics but also provides interpretability by pinpointing which modes have collapsed. We validate DMD-GEN on both synthetic and real-world datasets using various generative models, including TimeGAN, TimeVAE, and DiffusionTS. The results demonstrate that DMD-GEN correlates well with traditional evaluation metrics for static data while offering the advantage of applicability to dynamic data. This work offers for the first time a definition of mode collapse for time series, improving understanding, and forming the basis of our tool for assessing and improving generative models in the time series domain. Our code is publicly available at: \href{https://github.com/abbahaddou/DMDGen}{https://github.com/abbahaddou/DMDGen}.

\end{abstract}

\section{Introduction}\label{introduction}
Generative models gained significant attention in recent years, driven by recent advancements in computational power, the availability of extensive datasets, and breakthrough developments in machine learning (ML) algorithms. Notably, models like Generative Adversarial Networks (GANs) and Variational Autoencoders (VAEs) excel at learning compact representations of data \cite{goodfellow2014generative,pu2016variational,oublal2024disentangling}. These models are invaluable for generating realistic samples for testing, training other models, or augmenting datasets to enhance the performance of ML algorithms \cite{zheng2023toward,abdollahzadeh2023survey}. However, recent studies have revealed that generative models sometimes fail to produce diverse samples, leading to reduced effectiveness in applications that necessitate a broad spectrum of variations \cite{aboussalah2023recursive,berns2022increasing,new2023evaluating}. An illustration of this challenge can be seen in GANs, which frequently experience mode collapse, a phenomenon where the generator focuses on a limited subset of the data distribution, leading to the production of repetitive or similar samples rather than capturing the full diversity of the training data \cite{bang2018improved,pan2022unigan,eide2020sample}. VAEs also face a phenomenon called posterior collapse, where the model tends to generate outputs that are similar or indistinguishable for different inputs. This limitation reduces the model's ability to produce diverse samples \cite{he2019lagging,wang2021posterior}. Diffusion models, while generally robust against mode collapse compared to GANs and VAEs, are not entirely immune to difficulties in covering the full data distribution. These challenges can become apparent when high classifier-free guidance is employed during sampling or when training on small datasets \cite{ho2022classifier,sadat2024cads,Qin2023ClassBalancingDM}.

The issue of diversity in generative models has received significant attention in fields such as computer vision and natural language processing \cite{lee2023beyond,chung2023increasing,liu2020diverse,ibarrola2024measuring}, however, it remains relatively underexplored in the domain of time series data. Due to the inherent time-varying nature of time series, the traditional definition of mode collapse is inadequate, necessitating a new framework for this context. Unlike images, time series data often exhibits patterns that change over time, such as trends, seasonality, or cyclic behavior. Moreover, in the context of mode collapse for generative models, particularly GANs, modes refer to distinct patterns or sub-groups within the data distribution that the model is attempting to learn and replicate. However, defining modes for time series data is challenging due to several intrinsic characteristics of time series as they must be able to capture these evolving patterns rather than generating repetitive or static sequences.

\textbf{Contributions.} The contributions of our work are as follows:

\begin{itemize}
\item \textbf{New Definition of Mode Collapse for Time Series:} We introduce a new definition of mode collapse specifically for time series data, leveraging DMD to capture and analyze coherent dynamic patterns.

\item \textbf{Development of DMD-GEN Metric:} We propose DMD-GEN, a new metric to detect mode collapse, which consistently aligns with traditional generative model evaluation metrics while offering unique insights into time series dynamics.

\item \textbf{Enhanced Interpretability:} The DMD-GEN metric provides increased interpretability by decomposing the underlying dynamics into distinct modes, allowing for a clearer understanding of the preservation of essential time series characteristics.

\item \textbf{Efficiency in Time Complexity:} Our approach offers significant computational efficiency as it requires no additional training, making it highly scalable for real-time applications.
\end{itemize}

\section{Background and Related Work}\label{background}
\subsection{Mode Collapse for Time Series}
Before delving into the details of our new evaluation metric incorporating the aforementioned concepts, it is worth highlighting the challenges to be addressed in order to measure mode collapse when dealing with time series. 
\textit{(i) Capturing Modes:} For time series, we need to consider \textit{modes} that represent different unfolding patterns over time. Real-world time series data rarely exhibits a single and clean pattern \cite{lim2021time,kantz2004nonlinear}. Instead, time series data often exhibits multiple patterns simultaneously, e.g. superimposed on long-term trends and shorter-term fluctuations representing unfolding modes. This makes it challenging to isolate and identify the specific mode of interest. Moreover, unlike data with clear separations like images or text, time series data is continuous.  This makes it challenging to pinpoint the exact start and end points of a specific unfolding pattern (mode). \textit{(ii) Similarity Measurement:} Time series data often have different characteristics that make standard distance metrics such as Euclidean distance less effective as a similarity measure. For example, Euclidean distance is sensitive to different dimension scales in each dimension, and when the dimensionality is high, the distances may become dominated by differences in certain dimensions, leading to inaccuracies in similarity measurement. In addition, Euclidean distance does not take into account the temporal dependencies present in time series data. It treats each timestamp as independent, which may not be appropriate for time series data where the ordering and temporal relationships between data points are crucial. More sophisticated measures, such as Dynamic Time Warping (DTW) \cite{muller2007dynamic}, are designed to handle these temporal dependencies by aligning sequences in a way that minimizes distance, however, it cannot effectively capture the underlying modes or coherent dynamic patterns in the data. This inability to recognize and preserve the essential modes means DTW falls short in assessing mode collapse.

Unlike in images, the mode collapse issue in time series cannot be easily distinguished with human eyes. Therefore, this area remains relatively under-explored. Few studies have formally addressed this problem and proposed solutions within the domain of time series.  The work of \cite{lin2020doppelganger} developed a custom auto-normalization heuristic which normalizes each time series individually rather than normalizing over the entire dataset. However, the custom auto-normalization heuristic only concerns mode collapse defined by the different offset values between each time series while it does not take into account whether the model has generated the entire dataset's trends and seasonality. Additionally,  \textit{DC-GAN} \cite{min2023directed} is the first time-series GAN that can generate all the temporal features in a multimodal distributed time series. DC-GAN relies on the concept of directed chain stochastic differential equations (DC-SDEs) \cite{detering2020directed}. While the authors did not explicitly mention mode collapse, it would be valuable to investigate further whether the DC-GAN successfully captures the full range of trends and seasonality present in the dataset.

\subsection{Dynamic Mode Decomposition}
\textit{Dynamic Mode Decomposition} (DMD) \cite{schmid2010dynamic,peters2019data} is a data-driven and model-free method used for analyzing the underlying dynamics of complex systems such as fluid dynamics. It is used to extract modal descriptions of a nonlinear dynamical system from data without any prior knowledge of the system required. DMD yields direct information concerning the data dynamics, allowing us to compare multiple time series. Given a dynamical system $ \dot{\mathbf{x}}(t)=\mathbf{f}(\textbf{x}(t),t;\mu),$ where $\mathbf{x}(t)\in\mathbb{R}^n$ is a vector of dimension $n$ representing the state of the dynamical system at time t, $\mu\in \mathbb{R}^{p}$ contains parameters of the system, and $\textbf{f} : \mathbb{R}^{n}\times \mathbb{R}\times \mathbb{R}^{p} \rightarrow  \mathbb{R}^{n}$ represents the dynamics. DMD approximates a locally linear dynamical system $ \dot{\mathbf{x}}\approx \mathcal{A}\mathbf{x},$ where the operator $ \mathcal{A}$ is the best-fit linear approximation to $\mathbf{f}$ through regression. This linear approximation allows the representation of the system's behavior in a simplified framework and helps construct reduced-order models that capture the essential dynamics of the systems. This is particularly useful for systems with large state spaces like fluid dynamics \cite{kutz2017deep,jiaqing2018dynamic}.  Analogously, we approximate the dynamical system linearly for discrete time series.  Given a dynamical system $\mathbf{x} : t\in \mathbb{R} \mapsto \mathbf{x}(t) \in \mathbb{R}^n$, we generate discrete-time snapshots of length $m$, arranged into two data matrices $\mathbf{X}, \mathbf{X}^\prime \in \mathbb{R}^{n\times m}$ defined as follows,
        $$\mathbf{X}= 
        \left[
        \begin{array} {cccc}
           \vrule & \vrule & & \vrule \\
            \mathbf{x}_0 & \mathbf{x}_1 & \cdots & \mathbf{x}_{m-1} \\
             \vrule & \vrule & & \vrule 
        \end{array}
        \right], ~~~~\mathbf{X}'= 
        \left[
        \begin{array} {cccc}
           \vrule & \vrule & & \vrule \\
            \mathbf{x}_1 & \mathbf{x}_2 & \cdots & \mathbf{x}_{m} \\
             \vrule & \vrule & & \vrule 
        \end{array}
        \right]. $$

These snapshots are taken with a time-step $ \Delta t $ small enough to capture the highest frequencies in the system's dynamics, i.e., $ \forall k \in \mathbb{N}, \, \mathbf{x}_k = \mathbf{x}(k \Delta t) $. Assuming uniform sampling in time, we approximate the dynamical system linearly as $ \mathbf{x}_{k+1} \approx \mathbf{A}^\star \mathbf{x}_k $, where $ \mathbf{A}^\star \in \mathbb{R}^{n \times n} $ is the best-fit operator, i.e., $ \mathbf{A}^\star = \arg \min_\mathbf{A} \| \mathbf{X}^\prime - \mathbf{A}\mathbf{X} \|_F = \mathbf{X}^\prime \mathbf{X}^\dagger$, where $\|.\|_F$ is the Frobenius norm and $\mathbf{X}^\dagger$ is the Moore-Penrose generalized inverse of $\mathbf{X}$. The optimal operator $\mathbf{A}^\star $ is linked with the operator $ \mathcal{A} $, defined earlier, by the equation $\mathbf{A}^\star = \exp(\mathcal{A} \Delta t)$, cf. Appendix \ref{app_data_impl}.

The DMD operator $\mathbf{A}^\star$ is intricately connected to the \textit{Koopman theory} \cite{mezic2013analysis,brunton2021modern}. The DMD operator acts as an approximate representation of the Koopman operator within a finite-dimensional subspace of linear measurements, The connection was originally established by \cite{rowley2009spectral}. Koopman operators, provide a powerful framework for globally linearizing nonlinear dynamical systems, offering valuable insights into their dynamics through spectral analysis \cite{colbrook2024rigorous}. By examining the eigenvalues $\mathbf{\Lambda}=diag(\lambda_1, \ldots, \lambda_r)$ and the corresponding eigenvectors  $\mathbf{\Phi} = \concat_{s=1}^r \phi_{s} \in \mathbb{R}^{n\times r}$ of the DMD operator $\mathbf{A}^\star$,  where $r$ is the rank of the matrix $\mathbf{X}$ and $\concat$ is the concatenation operator, we can effectively capture and understand the underlying patterns and dynamics of the system. Conventionally, the eigenvectors and their corresponding eigenvalues are arranged in descending order based on the magnitude of the eigenvalues, i.e., $\left|\lambda_1\right| \geq \ldots \geq \left|\lambda_r\right|$.

For a high-dimensional state vector $\mathbf{x}\in\mathbb{R}^n$, the matrix $\mathbf{A^\star}$ comprises $n^2$ elements, making its representation and spectral decomposition computationally challenging. To address this, we apply dimensionality reduction to efficiently compute the dominant eigenvalues and eigenvectors of $\mathbf{A^\star}$ by constructing a reduced-order approximation $ \mathbf{\tilde{A}} \in \mathbb{R}^{r\times r}$. The DMD approximation at each time step $k=0,1,\ldots,m$ can be expressed as follows,
\begin{equation}\label{dmd_solution}
  \forall k,~~~~  \mathbf{x_k} = \sum_{j=1}^{r} \phi_j \lambda_j^{k} b_j = \mathbf{\Phi \Lambda^{k} b},
\end{equation}

where $\phi_j$ are DMD modes (eigenvectors of the A matrix, $\lambda_j$ are DMD eigenvalues (eigenvalues of the A matrix), and $b_j$ is the mode amplitude ($\mathbf{b = \Phi^\dagger x_0}$ in the matrix notation). The detailed steps for this process are provided in Appendix \ref{DMD_Expansion}.
 According to the Equation \ref{dmd_solution}, DMD modes can also be viewed as bases that span a subspace representing coherence patterns among the attributes of $\mathbf{x}(t)$. DMD decomposes a complex time series into a collection of simpler, coherent modes. Each mode captures a specific aspect of the system's behavior, such as an oscillation, an exponential growth/decay, or a traveling wave. The modes are spatial fields that often identify coherent structures in the flow, which are captured by the DMD eigenvalues $\mathbf{\Lambda}$ and eigenvectors $\mathbf{\Phi}$. For example, the imaginary part of the eigenvalues $\mathbf{\Lambda}$ determines the oscillation frequency, while the real part indicates the rate of decay \cite{tu2013dynamic,chen2012variants}. The DMD eigenvectors capture the spatial structure or spatial coherence of a particular mode. Therefore, by examining the components of the eigenvectors, we can know which parts of the system contribute to the overall dynamic behavior represented by the corresponding eigenvalue. Consequently, DMD provides a rigorous framework for defining and analyzing mode collapse in time series, offering a powerful role in understanding the dynamics of a time series.

\section{DMD-GEN: Toward an Explainable Metric for Capturing Mode Collaspe in Time Series}\label{method}
\subsection{Notations}
We explore generative models, denoted as $\mathcal{G}$, such as GAN, VAE, or the Diffusion Model, specifically adapted for time series data, as discussed in prior works \cite{yoon2019timegan, yuan2024diffusionts}. These models are trained on a dataset comprising $N$ time series, each with a fixed length, represented as $\left \{  \mathbf{X}_i\right \}_{i=1}^{N}$. During the inference phase, we utilize these trained models to synthetically generate a set of $\widetilde{N}$ time series, denoted as $\{  \mathbf{\widetilde{X}}_j \}_{j=1}^{\widetilde{N}}$. Both the original and generated time series are assumed to have a consistent length (number of time points), denoted as $\ell$, and dimensionality (number of features), represented as $n$. Formally, for any pair of indices $i$ and $j$, the original and generated time series $\mathbf{X}_i$ and $\mathbf{\widetilde{X}}_j$ are elements of the Euclidean space $\mathbb{R}^{n\times \ell}$.

\subsection{Measuring the Similarity Between Time Series Using Their Respective DMD Modes}
We are interested in measuring the dynamics similarity between a real time series $\mathbf{X}_i$ and a generated time series $\mathbf{\widetilde{X}}_j$. Comparing the respective DMD eigenvectors offers a valuable approach to assess similarity between time series, as they allow for the recognition of dynamic patterns. According to Equation \ref{dmd_solution}, the dominant modes can be captured using the eigenvectors corresponding the largest DMD eigenvalues. If both time series exhibit similar dominant modes (eigenvectors with high eigenvalues), it suggests they share similar underlying dynamics. This could be the case for seasonal patterns in temperature data or business cycles in economic data.

\begin{definition}[\textbf{Temporal Modes}]\label{def:modes_DMD}
Given a time series $\mathbf{X} = \left [ \mathbf{x}_1, \ldots, \mathbf{x}_\ell \right ] \in \mathbb{R}^{\ell\times n}$, we define the corresponding set of temporal modes $\mathcal{M}_k (\mathbf{X})$ as  the set of eigenvectors $\left \{ \phi_1, \ldots,  \phi_k  \right \}$  associated with the $k-$largest eigenvalues of the corresponding DMD operator, which is the operator that governs the evolution of the system state generating the time series data. Mathematically, we represent $\mathcal{M}_k (\mathbf{X})$ as a matrix formed by the concatenation of the eigenvectors, i.e., 
$$\mathcal{M}_k (\mathbf{X}) = \concat_{s=1}^k \phi_{s} =  \left[
                        \begin{array} {cccc}
                           \vrule & \vrule & & \vrule \\
                            \phi_1 & \phi_2 & \cdots & \phi_{k} \\
                             \vrule & \vrule & & \vrule 
                        \end{array}
                        \right]^\top \in \mathbb{R}^{k\times n}.$$
    \end{definition}

Definition \ref{def:modes_DMD} captures the essence of a mode in terms of the dominant eigenvalues and eigenvectors of the DMD operator, highlighting the significant dynamic structures in the time series data. Therefore,  to compare the similarity between two time series, such as an original time series $\mathbf{x}$ and another $\mathbf{\widetilde{x}}$ generated by a generative model $\mathcal{G}$, we can focus on the similarity of their respective temporal modes $\mathcal{M}_{k}(\mathbf{X})$ and  $\mathcal{M}_{k}(\mathbf{\widetilde{X}})$. These modes, as defined using the $k-$largest eigenvectors and eigenvalues from DMD, represent the key dynamic patterns in the time series. Comparing these modes allows us to evaluate how well the generative model has preserved the essential dynamics of the original data. 

However, comparing the distances of eigenvectors $\mathcal{M}_{k}(\mathbf{X})$ and $\mathcal{M}_{k}(\mathbf{\widetilde{X}})$ is mathematically challenging. Although these vectors have similar dimension, the eigenvector subspaces are not necessarily aligned and usually have different bases which makes the comparison not straightforward. As the eigenvector subspaces have both the same dimension, we can use the concept of Grassmann manifold to compare the similarity between $\mathcal{M}_{k}(\mathbf{X})$ and  $\mathcal{M}_{k}(\mathbf{\widetilde{X}})$ \cite{kobayashi1996foundations}. A Grassmannian manifold \( \text{Gr}(k, n) \) is the space of all \( k \)-dimensional linear subspaces of an \( n \)-dimensional vector space \( \mathbb{R}^n \) or \( \mathbb{C}^n \). Formally, the Grassmannian can be defined as follows:

\begin{definition}[\textbf{Grassmannian manifold}]\label{def:Grassmann_Manifold}
Let \( V \) be an \( n \)-dimensional vector space over a field \( \mathbb{F} \) (typically \( \mathbb{R} \) or \( \mathbb{C} \)). The Grassmannian manifold \( \text{Gr}(k, n) \) is the set of all \( k \)-dimensional subspaces of \( V \), where \( 1 \leq k \leq n \). Mathematically, it can be expressed as:
\[
\text{Gr}(k, n) = \{ W \subseteq V : \dim(W) = k \}.
\]
\end{definition}

The Riemannian distance between two subspaces is the length of the shortest geodesic connecting the two points on the Grassmann manifold, which can be calculated based on \textit{principal angles} between subspaces, which in our case represent the temporal modes, c.f. See Definition \ref{def:principal_angles}.

\begin{definition}[\textbf{Principal Angles Between Temporal Modes}]\label{def:principal_angles}
        Let the columns of $\mathcal{M}_{k}(\mathbf{X})$ and  $\mathcal{M}_{k}(\mathbf{\widetilde{X}})$ represent two linear subspaces $\mathbf{U}$ and $\mathbf{\widetilde{U}}$, respectively. The principal angles $0\leq\theta_1\leq\cdots\leq\theta_r\leq\pi/2$ between the two subspaces are defined recursively as follows:
        \begin{equation} \label{principal_angle}  \cos{\theta_k}=\max_{u\in\mathbf{U}} ~~\max_{v\in \mathbf{\widetilde{U}}}u^\top v ~~~~ s.t.~~ \left\{\begin{array}{ll}
     u^\top u=v^\top v=1 \\ 
    u^\top u_i=v^\top v_i = 0, i=1,\ldots,k-1
    \end{array}\right.
        \end{equation}
    \end{definition}

The work of \citet{bjorck1973numerical} have shown that the principal angles can be efficiently computed via the singular value decomposition (SVD) of $\mathbf{Q^\top \widetilde{Q}}$, where $\mathbf{QR}$ and $\mathbf{\widetilde{Q}\widetilde{R}}$ are the QR factorizations of $\mathcal{M}_{k}(\mathbf{X})$ and  $\mathcal{M}_{k}(\mathbf{\widetilde{X}})$, respectively.  Writing the SVD decomposition of $\mathbf{Q^\top \widetilde{Q}}$ as $\mathbf{Q^\top \widetilde{Q}}= \mathbf{U_{ang}\Sigma_{ang} V_{ang}^\top}$ where $\mathbf{\Sigma_{ang}}$ is a diagonal matrix. If $s$ is the rank of $\mathbf{\Sigma_{ang}}$, then the principal angles corresponds to the \textit{arcos} of the $s$ first singular values of $\mathbf{\Sigma_{ang}}$, i.e. $\mathbf{\Theta}=diag(\cos^{-1}\sigma_1,\ldots,\cos^{-1}\sigma_s)$. In \cite{bito2019learning}, they incorporated the following Grassmann metrics into several learning algorithms. Such metrics include the \textit{Projection} distance defined by the Frobenius norm of the matrix $sin \mathbf{\Theta}$, i.e. 

    \begin{equation}\label{eq:proj_distance}
        d_P\left ( \mathcal{M}_{k}(\mathbf{X}),\mathcal{M}_{k}(\mathbf{\widetilde{X}})   \right ) = \left \| \mathbf{sin(\Theta)}  \right \|_F = \left(\sum_{k=1}^s\sin^2\theta_k\right)^{1/2}. 
    \end{equation}
 
The new spectral distance based on the principal angles between the subspaces spanned by the normalized eigenvectors can serve as a similarity metric. The smaller the principal angles, the closer the subspaces are to each other, indicating a higher degree of similarity.

Given two sets of temporal modes $ \mathcal{M}_{k}(\mathbf{X})$ and $\mathcal{M}_{k}(\mathbf{\widetilde{X}})$, it is known that there are many geodesics linking the two points on the $\text{Gr}(k, n)$ \cite{Wong1967,sun2016complete}. If all the principal angles are between $0$ and $\pi/2$, it is proven that the geodesic is unique \cite{Wong1967,sun2016complete}. 

\subsection{Measuring Mode Collapse For Time Series}

To measure the mode collapse of generative models for time series data, we propose a novel approach using Optimal Transport (OT) to evaluate the similarity and preservation of modes between real and generated time series. Initially, DMD is employed to extract key modes from both real and generated time series, capturing the significant dynamic patterns inherent in each dataset. For a given $L$ sampled batchs of real time series $\mathcal{X} = \{\mathbf{X}_1, \mathbf{X}_2, \ldots, \mathbf{X}_L\}$ and generated time series $\mathcal{\widetilde{X}} = \{\mathbf{\widetilde{X}}_1, \mathbf{\widetilde{X}}_2, \ldots, \mathbf{\widetilde{X}}_L\}$, we compute the respective sets of DMD modes, $ \{ \mathcal{M}_{k}(\mathbf{X_i})  \}_{i=1}^L, \{ \mathcal{M}_{k}(\mathbf{\widetilde{X}_j})   \}_{j=1}^L$, which encapsulate the dominant temporal dynamics. We then construct a cost matrix $\mathbf{C}$, where each element $\mathbf{C}_{ij}$ quantifies the dissimilarity between modes $ \mathcal{M}_{k}(\mathbf{X_i}) $  and $ \mathcal{M}_{k}(\mathbf{\widetilde{X}_j})$, using principal angle based metric. The OT problem is solved to find an optimal transport plan $\gamma^\star$ that minimizes the total transportation cost, thus identifying the best mapping between the modes of the real and generated time series using the Wasserstein distance defined as follows,
\begin{equation}\label{eq:metric}
   \mathbb{E}_{i,j}\left [   W_p \left (\mathcal{M}_{k}(\mathbf{X_i}) , \mathcal{M}_{k}(\mathbf{\widetilde{X}_j})  \right ) \right ] = \mathbb{E}_{i,j}\left [ \min_{\gamma \in \Pi} \langle \gamma, \mathbf{C}\rangle_{p}\right ] =   \mathbb{E}_{i,j}\left [\left ( \min_{\gamma \in \Pi} \sum_{i,j=1}^{L} \gamma_{ij} \mathbf{C_{ij}}^p \right )^{\frac{1}{p}}\right ],
\end{equation}

where $p$ is the order of Wasserstein distance and $\Pi$ is  the set of all joint probability distributions. The resulting Wasserstein distance, derived from the optimal transport plan, serves as a robust measure of mode collapse: a lower Wasserstein distance indicates better preservation of the original modes in the generated data, highlighting the effectiveness of the generative model in maintaining the intrinsic dynamic patterns of the time series. The geodesic distance $\gamma$ in Equation \ref{eq:metric} is defined by Theorem \ref{Geodesic_Equation}. 


\begin{theorem}[DMD Mode Geodesic]
\label{Geodesic_Equation}
Let $\mathcal{M}_{k}(\mathbf{X}), \mathcal{M}_{k}(\widetilde{\mathbf{X}}) \in \mathbb{R}^{n \times k}$ be matrices whose columns form orthonormal bases of two $k$-dimensional subspaces of $\mathbb{R}^n$. Let $\Theta = \operatorname{diag}(\theta_1, \theta_2, \dots, \theta_k)$ be the diagonal matrix of principal angles between the subspaces spanned by $\mathcal{M}_{k}(\mathbf{X})$ and $\mathcal{M}_{k}(\widetilde{\mathbf{X}})$. Let $\Delta \in \mathbb{R}^{n \times k}$ be an orthonormal matrix such that
\begin{equation}
\label{eq:delta_definition}
\mathcal{M}_{k}(\widetilde{\mathbf{X}}) = \mathcal{M}_{k}(\mathbf{X}) \cos(\Theta) + \Delta \sin(\Theta).
\end{equation}
Then, the geodesic linking $\mathcal{M}_{k}(\mathbf{X})$ and $\mathcal{M}_{k}(\widetilde{\mathbf{X}})$ on the Grassmann manifold $\mathrm{Gr}(k, n)$ is given by
\begin{equation}
\label{eq:geodesic}
\gamma(t) = \mathcal{M}_{k}(\mathbf{X}) \cos(t\Theta) + \Delta \sin(t\Theta), \quad \text{for } t \in [0,1],
\end{equation}
and the length of this geodesic corresponds exactly to the \emph{projection distance} defined by
\begin{equation}
\label{eq:proj_distance}
d_{\text{proj}}(\mathcal{M}_{k}(\mathbf{X}), \mathcal{M}_{k}(\widetilde{\mathbf{X}})) = \left( \sum_{i=1}^k \theta_i^2 \right)^{1/2}.
\end{equation}
\end{theorem}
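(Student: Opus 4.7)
The strategy is to split the argument into three pieces: (i) justify the decomposition \eqref{eq:delta_definition}; (ii) verify that $\gamma(t)$ in \eqref{eq:geodesic} is the (minimizing) geodesic on the Grassmannian; and (iii) compute its length by direct integration. Throughout, write $U:=\mathcal{M}_k(\mathbf{X})$ and $\widetilde U:=\mathcal{M}_k(\widetilde{\mathbf{X}})$, and recall from the discussion preceding the theorem that the principal angles $\theta_1,\dots,\theta_k$ arise as $\cos^{-1}\sigma_i$ where the $\sigma_i$ are the singular values of $Q^{\top}\widetilde Q$ (here $Q,\widetilde Q$ are orthonormal bases obtained from the QR factorizations of $U,\widetilde U$).

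First, I would construct $\Delta$ from the CS (Cosine-Sine) decomposition of $U^{\top}\widetilde U$. Writing the thin SVD as $U^{\top}\widetilde U = Y_1\cos(\Theta)Y_2^{\top}$ and setting $U':=UY_1$, $\widetilde U':=\widetilde U Y_2$ gives orthonormal bases of the same two subspaces aligned along principal directions. The residual $\widetilde U' - U'\cos(\Theta)$ is orthogonal to the column span of $U'$, and dividing each nonzero column by $\sin\theta_i$ yields an orthonormal matrix $\Delta$ with $U^{\top}\Delta = 0$ (columns corresponding to $\theta_i=0$ can be completed arbitrarily within $U^\perp$ since those terms vanish in \eqref{eq:delta_definition}). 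After absorbing $Y_1,Y_2$ into the choice of basis for the subspaces (which does not affect the points on $\mathrm{Gr}(k,n)$), this establishes \eqref{eq:delta_definition}.

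Second, I would invoke the standard Stiefel/Grassmann geometry (Edelman-Arias-Smith) that states: with the canonical metric, geodesics on $\mathrm{Gr}(k,n)$ emanating from the subspace $[U]$ with horizontal initial velocity $H=\Delta\Theta$ (where $U^{\top}\Delta=0$ and $\Delta$ has orthonormal columns) are exactly of the form $t\mapsto [U\cos(t\Theta)+\Delta\sin(t\Theta)]$. A direct check verifies the orthonormality of $\gamma(t)$: since $U^{\top}\Delta=0$ and both $U,\Delta$ have orthonormal columns,
\begin{equation*}
\gamma(t)^{\top}\gamma(t) \;=\; \cos^2(t\Theta) + \sin^2(t\Theta) \;=\; I_k,
\end{equation*}
so $\gamma(t)$ stays on the Stiefel manifold; the boundary conditions $\gamma(0)=U$ and $\gamma(1)=U\cos(\Theta)+\Delta\sin(\Theta)=\widetilde U$ (modulo the $Y_1,Y_2$ rotations absorbed above) then identify $\gamma$ as the unique horizontal lift of the Grassmann geodesic joining $[U]$ to $[\widetilde U]$. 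Minimality among all paths follows from $\theta_i\in[0,\pi/2]$, i.e.\ the two subspaces lie within the injectivity radius, as recalled in the paragraph following \eqref{eq:proj_distance}.

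Finally, the length computation is routine. Differentiating,
\begin{equation*}
\dot\gamma(t) \;=\; -U\,\Theta\sin(t\Theta) + \Delta\,\Theta\cos(t\Theta),
\end{equation*}
and using $U^{\top}U=\Delta^{\top}\Delta=I_k$ together with $U^{\top}\Delta=0$,
\begin{equation*}
\|\dot\gamma(t)\|_F^2 \;=\; \mathrm{tr}\bigl(\Theta^2\sin^2(t\Theta)\bigr) + \mathrm{tr}\bigl(\Theta^2\cos^2(t\Theta)\bigr) \;=\; \mathrm{tr}(\Theta^2) \;=\; \sum_{i=1}^k\theta_i^2.
\end{equation*}
Since $\|\dot\gamma(t)\|_F$ is constant in $t$, integrating over $[0,1]$ gives length $(\sum_i\theta_i^2)^{1/2}$, proving \eqref{eq:proj_distance}. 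The main obstacle I anticipate is the careful construction of $\Delta$ in (i) when some principal angles are zero or repeated (one must handle the rank-deficiency in $\sin\Theta$ and the non-uniqueness in the SVD when singular values coincide); invoking the CS decomposition, rather than building $\Delta$ column-by-column, cleanly sidesteps both issues.
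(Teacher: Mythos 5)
Your proof follows essentially the same route as the paper's: compute the principal angles via the SVD of $U^{\top}\widetilde U$, align the two bases along principal directions, build $\Delta=(\widetilde U'-U'\cos\Theta)\sin(\Theta)^{-1}$, exhibit the path $\gamma(t)=U\cos(t\Theta)+\Delta\sin(t\Theta)$, and integrate the constant speed $\|\dot\gamma\|_F=(\sum_i\theta_i^2)^{1/2}$. If anything, your write-up is a bit tighter on two points the paper leaves implicit — you cite the Edelman--Arias--Smith horizontal-lift characterization to justify that \eqref{eq:geodesic} really is a Grassmann geodesic (and is minimizing since $\theta_i\in[0,\pi/2]$ places $\widetilde U$ within the injectivity radius), and you handle the $\theta_i=0$ degeneracy by completing the corresponding columns of $\Delta$ arbitrarily in $U^{\perp}$ rather than inverting a singular $\sin\Theta$.
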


Theorem \ref{Geodesic_Equation} characterizes the geodesic path between two sets of temporal modes in time series data, establishing that the transformation between modes can be precisely expressed through a combination of trigonometric functions (proof in Appendix \ref{app_math_proof}).

This approach provides a quantitative and interpretable framework for assessing the performance of generative models in the context of time series data. We approximate our metric Equation \ref{eq:metric} using the law of large numbers. We detail the computational steps in Algorithm \ref{alg:metric}.

\begin{algorithm} 
    \textbf{Inputs: }Number of samples $B$  \\
    \textbf{Initialize:} $m$ = 0 \; 
    \ForEach{$l = 1,\ldots,B$}{
        Sample a batch of original times series $\mathbf{\mathcal{X}}$ and a batch of generated time series $\mathbf{\widetilde{\mathcal{X}}}$. \\
        \ForEach{$\mathbf{X}_i$ in original data batch $\mathbf{\mathcal{X}}$}{
            \ForEach{$\mathbf{\widetilde{X}}_j$ in generated data batch $\mathbf{\widetilde{\mathcal{X}}}$}{
                \begin{enumerate}
                    \item Extract temporal modes $ \mathcal{M}_{k}(\mathbf{X_i})$ of $\mathbf{X}_i$.
                    \item Extract temporal modes $\mathcal{M}_{k}(\mathbf{\widetilde{X}_j}) $ of $\mathbf{\widetilde{X}}_j$.
                    \item Obtain orthonormal bases $\mathbf{Q}_i$ and $\mathbf{\widetilde{Q}_j}$ from  \\ ~~~~~~~~~~~~~~~~~~~~~~~~~~~$\mathcal{M}_{k}(\mathbf{X_i}) = \mathbf{Q_i R }_i, \mathcal{M}_{k}(\mathbf{\widetilde{X}_j})  = \mathbf{\widetilde{Q}_j\widetilde{R}_j}.$
                    \item Obtain $\cos\mathbf{\Theta}=diag(\cos\theta_1,...,\cos\theta_r)$ from \\ ~~~~~~~~~~~~~~~~~~~~~~~~~~~$\mathbf{Q_i}^{\top}\mathbf{\widetilde{Q}_j}=\mathbf{U_{ang}}(\cos\mathbf{\Theta})\mathbf{V_{ang}}^T.$
                    \item Compute the dissimilarity matrix \\
                    ~~~~~~~~$\mathbf{C}_{ij} = d_P\left ( \mathcal{M}_{k}(\mathbf{X_i}),\mathcal{M}_{k}(\mathbf{\widetilde{X_j}})   \right ) = \left(r-\sum_{k=1}^r\cos^2\theta_k\right)^{1/2}.$
                \end{enumerate}
            }
        }
        $D_{DMD-GEN} \leftarrow D_{DMD-GEN} + \min_{\gamma\in\Pi} \langle \gamma, \mathbf{C}\rangle_{F}$
    }
    Return $D_{DMD-GEN}/B$
\caption{Detailed Steps For Computing DMD-GEM.} 
\label{alg:metric}
\end{algorithm}

The values of the  optimal mapping matrix $\gamma^\star = \argmin_{\gamma \in \Pi} \langle \gamma, \mathbf{C}\rangle_{p} $ in Equation \ref{eq:metric} reflect the extent to which the modes of each training time series are preserved within the generated time series. We can therefore use a guiding sampling technique based on $\gamma^\star$ for a faster and more effective learning.

\textbf{Time and Complexity.} The proposed DMD-GEN metric leverages DMD computation, Optimal Transport, and geodesic distance computation using principle angles, all of which contribute to the overall computational complexity. The time complexity of each component can be estimated separately: \textit{(i) DMD Complexity.} The time complexity of DMD primarily depends on computing the Singular Value Decomposition. Since we reduce the dimensionality to focus on dominant modes, making this step more efficient in practice, the time complexity to compute each $\mathcal{M}_k (\mathbf{X_i})$ and $\mathcal{M}_k (\mathbf{\widetilde{X}_j})$ is $\mathcal{O}(n\times k^2)$. \textit{(ii) Geodesic Distance Computation.} Calculating the principal angles between the subspaces involves QR decompositions and a SVD operation on the product of orthonormal matrices. Therefore, the complexity of computing the geodesic distance is $\mathcal{O}(k^3)$. \textit{(iii) Optimal Transport Complexity.}  To find the optimal mapping between the modes of original and generated time series, the complexity of solving the optimal transport problem using Sinkhorn's algorithm is $\mathcal{O}(B^2)$ where $B$ is the number of samples \citep{sinkhorn1967diagonal}. Therefore, the overall complexity of DMD-GEN is $\mathcal{O}(B \times  n^2 \times k^2 +B \times k^3 + B^2)$. The significant advantage of DMD-GEN is that it does not require additional training, which makes the overall approach computationally feasible compared to metrics that involve model retraining.

\section{Experiments }\label{experim_setup}

\subsection{Datasets}
We evaluate the diversity of generative models across one synthetic dataset and three real-world datasets. The detailed statistics of each dataset can be found in Appendix \ref{app_dataset_implementation}.\\

\textbf{Sine waves.} We generated a synthetic dataset consisting of two sets of sine waves to represent a bimodal distributed data. The data were generated using the following formula: 
    \begin{equation}
        y(t) = A \cdot \sin(2\pi f t + \phi),
    \end{equation}
    where  \( A \) is the amplitude, \( f \) is the frequency, \( t \) is the time variable and \( \phi \) is the phase angle of the sine wave.
    Each mode consists of 2000 samples with phases being randomly chosen between $0$ and $2\pi$. For all the samples, the duration is 2 seconds and the sample rate is 12, making the length of each sequence be 24. $A = 0.5$ and $f = 1$ Hz for the first mode, and $A = 5$ and $f = 0.5$ Hz for the second mode.  

\textbf{Stock price.} To test our framework on a complex multimodal dataset, we used Google stocks data from 2004 to 2019, which was used in \citep{yoon2019timegan}. The data consists of 6 features which are daily open, high, low, close, adjusted close, and volume. The time series were then cut into sequences with length 24, following the setup in the work done by \citep{yoon2019timegan}.

\textbf{Energy.} We conducted experiments on UCI's air quality dataset \citep{misc_air_quality_360} consisting of hourly averaged responses from an array of 5 metal oxide chemical sensors embedded in an Air Quality Chemical Multisensor Device in an Italian city. Data was recorded from March 2004 to February 2005 and consists of 28 features. Unlike the previous datasets, this one has an unimodal distribution. The data is cut into several sequences of length 7. 

\textbf{Electricity Transformer Temperature and humidity (ETTh).} The ETTh dataset focuses on temperature and humidity data from electricity transformers \citep{haoyietal-informer-2021}. It includes 2 years of data at an hourly granularity, providing detailed temporal information about transformer conditions.

\subsection{Baseline Metrics}
We compared our proposed metric DMD-GEN with well-established time series evaluation metrics. Specifically, this comparison includes three key metrics:

\textbf{Predictive Score.} \citep{yoon2019timegan} The predictive score evaluates how well a generative model captures the temporal dynamics of the original data. It involves training a model on the generated data and assessing its performance on a real dataset. A lower predictive score indicates that the generated data contains patterns that are more representative of the temporal patterns found in the original data. 

\textbf{Discriminative Score.} \citep{yoon2019timegan} The discriminative score measures the similarity between real and generated time series data by training a binary classifier to distinguish between them.

\textbf{Contextual Frechet Inception Distance (context-FID).} \citep{jeha2022psa} Context-FID is an adaptation of the Frechet Inception Distance (FID), a metric used to assess the quality of images created by a generative model \citep{heusel2017gans}. For time series, context-FID measures the similarity between the real and generated data distributions by computing the Frechet distance between feature representations extracted from a time series feature encoder.


\subsection{Evaluation of Generative Models Using DMD Eigenvalues}

Figures \ref{fig:sines_eigenvalues} and \ref{fig:stock_eigenvalues} present the DMD eigenvalues of the original training dataset and those generated by DiffusionTS at the initial and final stages of training for the time series Sines and Stock. At Epoch 0, the generated eigenvalues significantly deviate from the original ones, indicating that the generated time series lacks the dynamic properties of the original dataset. At the final training epoch, the DMD eigenvalues of the synthetic time series are much closer to the DMD eigenvalues of the original time series, highlighting that the model has successfully learned to capture the underlying temporal dynamics of the original dataset.  This improvement demonstrates the capacity of DiffusionTS to learn and replicate complex temporal patterns through training. The DMD eigenvalues serve as effective indicators for identifying the similarity between the dynamics of the real and synthetic datasets, with the significant reduction in discrepancy pointing to improved quality and diversity of the generated data. The results of the same experiments conducted on other datasets, as well as the evolution of DMD eigenvalues throughout the training process of each of the generative models, can be found in Appendix \ref{app:eig_DMD}. DMD is highly useful for evaluating generative models for time series because it offers a clear and effective way to measure dynamic properties.  It shows how well a model captures inherent dynamic patterns, offering a more insightful evaluation.

\begin{figure}[t]
    \centering
\includegraphics[width=\linewidth]{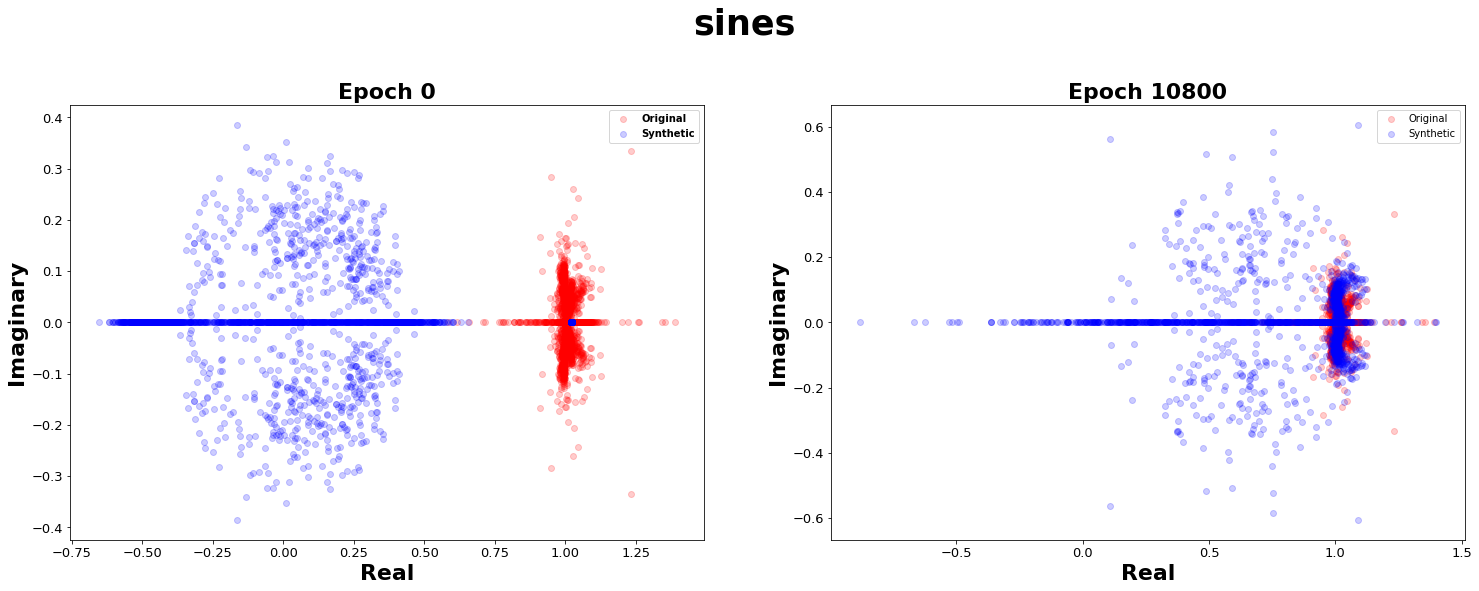}
    \caption{Comparison of DMD Eigenvalues between Original and Generated Time Series for DiffusionTS at Initial and Final Training Epochs on the dataset Sines.}
    \label{fig:sines_eigenvalues}
\end{figure}

\begin{figure}[t]
    \centering
\includegraphics[width=\linewidth]{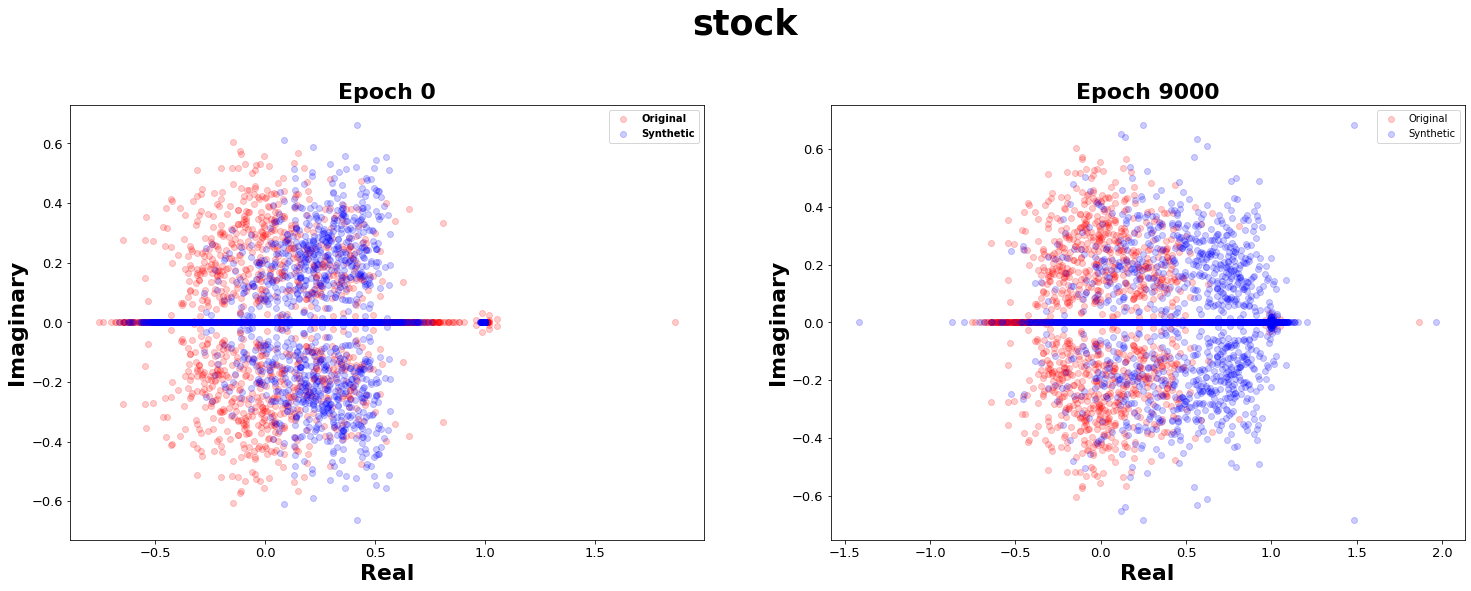}
    \caption{Comparison of DMD Eigenvalues between Original and Generated Time Series for DiffusionTS at Initial and Final Training Epochs on the dataset Stock. }
    \label{fig:stock_eigenvalues}
\end{figure}

\subsection{Consistency of DMD-GEN with Established Metrics}

Table \ref{tab:topological_GCN} shows that DMD-GEN is consistent with other metrics, such as the Predictive Score, Discriminative Score, and Context-FID, when comparing generative models  across different datasets. In all datasets, DMD-GEN's rankings align with those given by other metrics, effectively differentiating between generative models based on each metric values. A key advantage of DMD-GEN is that, unlike the other metrics, it doesn't require any training to evaluate the generated time series. This makes DMD-GEN a more efficient metric for practical use, as it reduces computational cost while maintaining consistent, reliable evaluation results.

\begin{table*}[t] 
\centering
\caption{Results on Multiple Time-Series Datasets. Highlighted results indicate the best performance. All four metrics agree on the best-performing model for each dataset. The symbol '-' denotes instances where the computation failed or crashed.}
\label{tab:topological_GCN}
\resizebox{\textwidth}{!}{%
\begin{tabular}{l|l|rrrr}\toprule
 Metric & Model & Sines  & ETTh & Stock  & Energy  \\ \midrule
\multirow{3}{*}{Discriminative Score}   & TimeGAN  & 0.03 (0.01) & \textbf{0.20 (0.03)} & \textbf{0.08 (0.04)} & \textbf{0.27 (0.04)} \\
            & TimeVAE   & 0.33 (0.02) & 0.50 (0.00) & 0.50 (0.00) & 0.50 (0.00)\\
            & DiffusionTS   & \textbf{0.02 (0.01)} & 0.50 (0.00) & 0.50 (0.00) & 0.50 (0.00) \\\midrule

\multirow{3}{*}{Predictive Score}   & TimeGAN & 0.09 (0.00) & \textbf{12.39 (0.00)} & \textbf{6.40 (0.30)} & \textbf{24.01 (0.00)}\\
            & TimeVAE  & 0.12 (0.00) & 13.05 (0.03) & 27.12 (0.57) & 24.61 (0.06) \\
            & DiffusionTS &   \textbf{0.09 (0.00)} & 13.18 (0.01) & 17.78 (0.08) & 24.49 (0.07)\\\midrule

\multirow{3}{*}{Context-FID}   & TimeGAN & 0.04 (0.01) & \textbf{0.40 (0.05)} & -  & \textbf{11.92 (1.95)}\\
            & TimeVAE  & 5.01 (1.04) & 12.22 (1.15) & - & 135.27 (22.83)\\
            & DiffusionTS & \textbf{0.01 (0.00) }& 11.65 (0.76) & - & 127.02 (13.68) \\\midrule
            
\multirow{3}{*}{DMD-GEN}   & TimeGAN  & 33.91 (1.75) & \textbf{20.96 (1.10)} & \textbf{0.73 (0.19)} & \textbf{44.57 (7.34)}\\
            & TimeVAE  & 31.65 (0.51) & 98.91 (0.68) & 4.02 (0.08) & 164.48 (0.44) \\
            & DiffusionTS & \textbf{29.66 (0.34)} & 105.46 (0.82) & 13.62 (2.53) & 150.67 (0.97) \\ \bottomrule
\end{tabular}
                         
}

\end{table*}

\subsection{Synthetically Analyzing Metrics Under Mode Collapse Settings}

In order to study robustness under a range of mode collapse severities, we created a synthetic dataset where we can control the mode collapse severity. We generated a synthetic dataset consisting of \( N = 1000 \) time series, each sampled from one of two distinct generators, \(\mathcal{G}_1\) and \(\mathcal{G}_2\), defined as follows:
\begin{align*}
   \mathcal{G}_1 & =  \left\{(t,x) \mapsto \frac{a}{\cosh(x + b + 3)} \times \cos\left((c+2.3) \cdot t\right) \mid x \in [-5,5], \, t \in [0, 4 \pi], \, (a,b,c) \sim \mathcal{U} \right\}, \\
   \mathcal{G}_2 & = \left\{(t,x) \mapsto \frac{2+a}{\cosh(x)} \times \tanh(x) \times \sin\left((2.8+b) \cdot t\right) \mid x \in [-5,5], \, t \in [0, 4 \pi], \, (a,b,c) \sim \mathcal{U} \right\},
\end{align*}

where $\mathcal{U}$ denotes the uniform distribution over $[0,1]$. Each time series is discretized to a length of $T = 129$ and a dimensionality of $d = 65$. Figure \ref{fig:generators} in Appendix \ref{app_synthetic_generator} illustrates examples of time series generated using $\mathcal{G}_1$ and $\mathcal{G}_2$. Time series generated by the same underlying generator are considered to belong to the same \textit{mode}. To select the generator, we sample from a Bernoulli distribution with parameter $\lambda \in [0,1]$. We choose generator $\mathcal{G}_1$ if $\lambda < \lambda_{\text{ref}}$, and generator $\mathcal{G}_2$ otherwise. The parameter $\lambda_{\text{ref}} = 0.5$ is the reference value where both modes are equally probable. At this reference value, there is no preference for either mode, avoiding mode collapse. We denote the resulting dataset by $\mathcal{D}_N(\lambda)$.

For each metric $m$, the value for the non-collapse scenario is represented by $m(\mathcal{D}_N(\lambda_{\text{ref}}), \mathcal{D}_N(\lambda_{\text{ref}}))$ and the value for the mode collapse scenario is represented by $m(\mathcal{D}_N(\lambda_{\text{ref}}), \mathcal{D}_N(\lambda))$ under different mode collapse severities $\mathcal{D}_N(\lambda)$, where $\lambda \neq \lambda_{\text{ref}}$. Since the metrics have different ranges, we compare metrics based on the performance defined as follows, 
$$\text{Perf}(\lambda) =m(\mathcal{D}_N(\lambda_{\text{ref}}), \mathcal{D}_N(\lambda))/m(\mathcal{D}_N(\lambda_{\text{ref}}), \mathcal{D}_N(\lambda_{\text{ref}})) -1$$ 

We report the values of $\text{Perf}(\lambda)$ in Table \ref{tab:synthetic_collapse}, where we can see that the performance for the three benchmark metrics varies significantly in magnitude and sign as $\lambda$ changes.

\begin{table}[t]
\caption{DMD-GEN demonstrates stable and robust performance for a wide range of mode collapse severities $\lambda$ compared to the benchmark metrics and it is more interpretable.}
\resizebox{\columnwidth}{!}{%
\begin{tabular}{l|rrrrrr}
\toprule
 \multirow{2}{*}{Metric} & \multicolumn{6}{c}{$\lambda$}    \\ \cline{2-7} 
     &  10\% & 20\% & 30\% & 40\% & 60\% & 70\% \\ \hline
Discriminative Score  & +586.79  \% & +443.40  \% & +181.13  \% & -20.75  \% & -16.98  \% & +143.40  \% \\
Predictive Score  & -0.54  \% & -0.71  \% & -0.83  \% & -0.40  \% & +0.35  \% & +0.54  \% \\
Context-FID  & +36796.45  \% & +18394.64  \% & +8210.25  \% & +1874.76  \% & +1855.58  \% & +7019.51  \%   \\ 
DMD-GEN  & +681.03  \% & +477.76  \% & +312.22  \% & +115.02  \% & +114.92  \% & +314.18  \% \\  \bottomrule

\end{tabular}
}
\label{tab:synthetic_collapse}
\end{table}

Our proposed metric DMD-GEN is more stable and robust to the change in $\lambda$. Unlike Predictive and Discriminative scores, DMD-GEN is highly effective at detecting even small mode collapses. The performance of  DMD-GEN, as well as Context-FID increases quickly when $\lambda$ deviates from the reference value $\lambda_{\text{ref}}$. DMD-GEN demonstrates its ability to quickly detect minor discrepancies in generated modes. Despite not achieving the values of Context-FID, DMD-GEN is an easy-to-implement solution to identify mode collapse.


\section{Conclusion}\label{sec:conclusion}
In this paper, we introduced a novel metric, DMD-GEN, specifically designed to evaluate generative models and quantify mode collapse in time series. Using Dynamic Mode Decomposition (DMD) and Optimal Transport, DMD-GEN provides a robust framework to evaluate the similarity of dynamic patterns between generated and original time series. Compared to existing metrics like Discriminative Score, and Predictive Score, DMD-GEN showed superior sensitivity in mode collapse detection. Furthermore, when comparing generative models, the results indicate that DMD-GEN consistently aligns with all other baseline metrics that require additional training, while DMD-GEN itself does not require any training, making it a more efficient alternative. DMD-GEN provides increased interpretability by decomposing the underlying dynamics into distinct modes, allowing for a clearer understanding of the preservation of essential time series characteristics. This highlights the potential of DMD-GEN as a crucial tool for advancing generative modeling techniques in time series analysis, promoting diversity in generated outputs. Future work can explore the integration of DMD-GEN into the training process, potentially using it as a guiding metric to improve model training dynamically.

\bibliography{iclr2025_conference}
\bibliographystyle{iclr2025_conference}

\appendix
\newpage

\vbox{%
\hsize\textwidth
\linewidth\hsize
\vskip 0.1in
}

\section{Dynamical Mode Decomposition: Details and Proofs}\label{app_data_impl} 
\subsection{The Link Between the DMD operators in Continuous and Discrete Cases}

Given a dynamical system $    \dot{\mathbf{x}}(t)=\mathbf{f}(\textbf{x}(t),t;\mu),$ we linearly approximate the dynamics using DMD using the operator $\mathcal{A}\in \mathbb{R}^{n \times n}$, i.e. $$\forall t, ~~   \dot{\mathbf{x}}(t)=\mathcal{A}\textbf{x}.$$

Discretizing time into intervals of $\Delta t$ and capturing snapshots accordingly, we establish the relationship between consecutive time steps in the following equation:

\begin{equation}\label{eq:contin_opera}
    \forall k, \quad \mathbf{x}_{k+1} = \mathbf{x}_{k} + \mathcal{A}\mathbf{x}_{k} \Delta t = \left (I + \Delta t   \mathcal{A} \right ) \mathbf{x}_{k}. 
\end{equation}

For a time-step $ \Delta t $ that is sufficiently small, we can employ the first-order Taylor expansion of the matrix $exp(\Delta t \mathcal{A} ) $, expressed as:
\begin{equation}\label{eq:Taylor_exp}
    exp(\Delta t \mathcal{A} ) \approx  I + \Delta t   \mathcal{A} 
\end{equation}
Therefore, from Equations \ref{eq:contin_opera} and \ref{eq:Taylor_exp}, we conclude that:
$$ \forall k, \quad \mathbf{x}_{k+1} \approx  exp(\Delta t \mathcal{A} ) \mathbf{x}_{k}.$$

Thus,
$$ \mathbf{A^\star} \approx  exp(\Delta t \mathcal{A} ).$$

\subsection{Feasible Spectral Decomposition of the DMD Operator using Dimensionality Reduction}
Algorithm \ref{algo:DMD_algo} presents the steps to compute the eigenvectors and eigenvalues of the DMD operator $\mathbf{A^\star}$  using Singular Value Decomposition (SVD) for dimensionality reduction.
\begin{algorithm} \caption{Dynamic  Mode Decomposition }

    \begin{enumerate}
        \item From collected snapshots of the system, build a pair of data matrices $(\mathbf{X}, \mathbf{X}')$.
        $$\mathbf{X}= 
        \left[
        \begin{array} {cccc}
           \vrule & \vrule & & \vrule \\
            \mathbf{x}_0 & \mathbf{x}_1 & \cdots & \mathbf{x}_{m-1} \\
             \vrule & \vrule & & \vrule 
        \end{array}
        \right],\mathbf{X^\prime}= 
        \left[
        \begin{array} {cccc}
           \vrule & \vrule & & \vrule \\
            \mathbf{x}_1 & \mathbf{x}_2 & \cdots & \mathbf{x}_{m} \\
             \vrule & \vrule & & \vrule 
        \end{array}
        \right]$$
        The closed formula of optimal DMD operator is $$\mathbf{A^\star} = \mathbf{X^\prime}\mathbf{X}^\dagger $$
        
    \item Compute the compact singular value decomposition (SVD) of $\mathbf{X}$:  
        $$\mathbf{X}\approx\mathbf{U\Sigma V}^\dagger$$
        where $U\in \mathbb{C}^{n\times r}, \Sigma \in \mathbb{C}^{r\times r},V\in \mathbb{C}^{m\times r}$ and $r\leq min(m,n)$ is the rank of $\mathbf{X}$.
        Therefore, $$\mathbf{A^\star} = \mathbf{X^\prime V\Sigma^{-1} U^\dagger} $$
    \item Define a matrix 
       $$\tilde{\mathbf{A}}=\mathbf{U^\dagger 
 A^\star U}  =   \mathbf{U^\dagger X^\prime V\Sigma^{-1}},$$
 since $U$ is a unitary matrix.
 
    $\tilde{\mathbf{A}}\in \mathbb{R}^{r\times r}$ defines a low-dimensional linear model of the dynamical system on proper orthogonal decomposition (POD) coordinates.
    \item Compute the eigen-decomposition of $\tilde{\mathbf{A}}$:
            \[\tilde{\mathbf{A}}\mathbf{W} = \mathbf{W\Lambda},\]
            where columns of $\mathbf{W}\in \mathbb{R}^{r\times r}$ are eigenvectors and $\mathbf{\Lambda}= diag(\lambda_1, \ldots, \lambda_r)\in \mathbb{R}^{r\times r}$ is a diagonal matrix containing the corresponding eigenvalues.
    \item Return DMD modes $\mathbf{\Phi}$: 
        \[\mathbf{\Phi}=\mathbf{X'V\Sigma^{-1}W}.\]
        Each column of $\mathbf{\Phi}$ is an eigenvector of $\mathbf{A}$ meaning a DMD mode $\mathbf{\phi}_k$ corresponding to eigenvalue $\lambda_k$
\end{enumerate}
\label{algo:DMD_algo}
\end{algorithm}

\subsection{DMD expansion}
\label{DMD_Expansion}
We will proof the closed formula $\forall k,~~~~  \mathbf{x_k} = \sum_{j=1}^{r} \phi_j \lambda_j^{k} b_j = \mathbf{\Phi \Lambda^k b},$ using recursion.

For $k=0$, we have,

\begin{align*}
   \mathbf{ x_0} & = \mathbf{I x_0}\\
    & = \mathbf{\Phi \Phi^\dagger x_0 } \\
    & = \mathbf{\Phi b } \\
    & = \mathbf{\Phi \Lambda^0 b } \\
\end{align*}

Let's now consider the equation hold for $k=0, \ldots, m$, we have, 
\begin{align*}
   \mathbf{ x_{k+1}} & = \mathbf{A^\star x_k}\\
    & = \mathbf{A^\star \Phi \Lambda^k b } \\
    & = \mathbf{\Phi  \Lambda \Lambda^k b } \\
    & = \mathbf{\Phi \Lambda^{k+1} b}.
\end{align*}

Therefore, the equality holds for all $k\in \mathbb{N}.$

\newpage
\section{Mathematical Proofs}\label{app_math_proof}

\subsection{Proof of Theorem \ref{Geodesic_Equation} — DMD Mode Geodesic}

\begin{theorem}[DMD Mode Geodesic]
Let $\mathcal{M}_{k}(\mathbf{X}), \mathcal{M}_{k}(\widetilde{\mathbf{X}}) \in \mathbb{R}^{n \times k}$ be matrices whose columns form orthonormal bases of two $k$-dimensional subspaces of $\mathbb{R}^n$. Let $\Theta = \operatorname{diag}(\theta_1, \theta_2, \dots, \theta_k)$ be the diagonal matrix of principal angles between the subspaces spanned by $\mathcal{M}_{k}(\mathbf{X})$ and $\mathcal{M}_{k}(\widetilde{\mathbf{X}})$. Let $\Delta \in \mathbb{R}^{n \times k}$ be an orthonormal matrix such that
\begin{equation}
\label{eq:delta_definition}
\mathcal{M}_{k}(\widetilde{\mathbf{X}}) = \mathcal{M}_{k}(\mathbf{X}) \cos(\Theta) + \Delta \sin(\Theta).
\end{equation}
Then, the geodesic linking $\mathcal{M}_{k}(\mathbf{X})$ and $\mathcal{M}_{k}(\widetilde{\mathbf{X}})$ on the Grassmann manifold $\mathrm{Gr}(k, n)$ is given by
\begin{equation}
\label{eq:geodesic}
\gamma(t) = \mathcal{M}_{k}(\mathbf{X}) \cos(t\Theta) + \Delta \sin(t\Theta), \quad \text{for } t \in [0,1],
\end{equation}
and the length of this geodesic corresponds exactly to the \emph{projection distance} defined by
\begin{equation}
\label{eq:proj_distance}
d_{\text{proj}}(\mathcal{M}_{k}(\mathbf{X}), \mathcal{M}_{k}(\widetilde{\mathbf{X}})) = \left( \sum_{i=1}^k \theta_i^2 \right)^{1/2}.
\end{equation}
\end{theorem}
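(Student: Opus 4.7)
The plan is to reduce the claim to the well-known closed-form exponential map on the Grassmannian (Edelman--Arias--Smith; Wong), and then verify that the specific representation given in (\ref{eq:delta_definition}) is exactly the one coming from the CS decomposition of $\mathcal{M}_{k}(\widetilde{\mathbf{X}})$ against $\mathcal{M}_{k}(\mathbf{X})$. Concretely, I would treat $\mathrm{Gr}(k,n)$ as the quotient $O(n)/\bigl(O(k)\times O(n-k)\bigr)$, identify the horizontal tangent space at a representative $U$ with matrices $H\in\mathbb{R}^{n\times k}$ satisfying $U^{\top}H=0$, and equip it with the canonical metric $\langle H_1,H_2\rangle=\operatorname{tr}(H_1^{\top}H_2)$. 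Under this metric the geodesic emanating from $U$ in direction $H$ with thin SVD $H=Q\Sigma V^{\top}$ is
\[
  \gamma(t)\;=\;U V\cos(t\Sigma)V^{\top}+Q\sin(t\Sigma)V^{\top},
\]
which, modulo right multiplication by an element of $O(k)$ (and hence as a point of the Grassmannian), is the desired form.

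Next I would realise the data $(\mathcal{M}_{k}(\mathbf{X}),\mathcal{M}_{k}(\widetilde{\mathbf{X}}))$ through a CS decomposition. Writing $U=\mathcal{M}_{k}(\mathbf{X})$ and $\widetilde{U}=\mathcal{M}_{k}(\widetilde{\mathbf{X}})$, take the SVD $U^{\top}\widetilde{U}=Y_{1}\cos(\Theta)Y_{2}^{\top}$; by Definition~\ref{def:principal_angles} and the Bj\"orck--Golub identity, the singular values are exactly $\cos\theta_{i}$. After the harmless change of bases $U\leftarrow UY_{1}$, $\widetilde{U}\leftarrow\widetilde{U}Y_{2}$ (which fixes both subspaces as points of $\mathrm{Gr}(k,n)$), one gets $U^{\top}\widetilde{U}=\cos(\Theta)$. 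Setting $\Delta:=\bigl(\widetilde{U}-U\cos(\Theta)\bigr)\sin(\Theta)^{-1}$ whenever $\sin(\Theta)$ is invertible (i.e.\ $\theta_{i}>0$; zero angles contribute trivially and can be handled by continuity), a direct computation using $U^{\top}\widetilde{U}=\cos(\Theta)$ and $\widetilde{U}^{\top}\widetilde{U}=I$ yields both $U^{\top}\Delta=0$ and $\Delta^{\top}\Delta=I$. This is precisely the orthonormal $\Delta$ of (\ref{eq:delta_definition}), and it plays the role of the ``principal vector'' completion.

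With $\Delta$ in hand I would verify the four defining properties of the geodesic in (\ref{eq:geodesic}). \textbf{(i)} $\gamma(0)=U$ and $\gamma(1)=U\cos(\Theta)+\Delta\sin(\Theta)=\widetilde{U}$. \textbf{(ii)} Orthonormality of the columns for every $t$: expanding $\gamma(t)^{\top}\gamma(t)$ and using $U^{\top}U=I$, $\Delta^{\top}\Delta=I$, $U^{\top}\Delta=0$ together with $\cos^{2}(t\Theta)+\sin^{2}(t\Theta)=I$ collapses it to $I_{k}$. \textbf{(iii)} Horizontality of the velocity: $\dot{\gamma}(t)=-U\Theta\sin(t\Theta)+\Delta\Theta\cos(t\Theta)$, which is seen to satisfy $\gamma(t)^{\top}\dot{\gamma}(t)=0$, so it lies in the horizontal distribution along $\gamma$. \textbf{(iv)} Matching the exponential map: $\dot{\gamma}(0)=\Delta\Theta$ has thin SVD $\Delta\cdot\Theta\cdot I$, which when plugged into the general formula recited in the first paragraph returns $\gamma$ itself. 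Hence $\gamma$ is the unique geodesic connecting the two subspaces, unique because the hypothesis of Theorem~\ref{Geodesic_Equation} (implicit through the well-posedness of $\Delta$) is that the principal angles lie in $[0,\pi/2)$, as stated in the preceding paragraph of the paper.

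Finally, for the length, I would compute
\[
  \bigl\|\dot{\gamma}(t)\bigr\|^{2}=\operatorname{tr}\!\bigl(\dot{\gamma}(t)^{\top}\dot{\gamma}(t)\bigr)=\operatorname{tr}\!\bigl(\Theta^{2}\sin^{2}(t\Theta)\bigr)+\operatorname{tr}\!\bigl(\Theta^{2}\cos^{2}(t\Theta)\bigr)=\operatorname{tr}(\Theta^{2})=\sum_{i=1}^{k}\theta_{i}^{2},
\]
where the cross terms vanish by $U^{\top}\Delta=0$ and the simplification uses $\cos^{2}+\sin^{2}=I$. Thus $\|\dot{\gamma}(t)\|$ is constant, and the geodesic length is $\int_{0}^{1}\|\dot{\gamma}(t)\|\,dt=\bigl(\sum_{i}\theta_{i}^{2}\bigr)^{1/2}$, matching (\ref{eq:proj_distance}). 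The main obstacle, and the step I would be most careful with, is the CS-decomposition argument that produces $\Delta$ with the two orthogonality relations $\Delta^{\top}\Delta=I$ and $U^{\top}\Delta=0$ (and the boundary behaviour when some $\theta_{i}=0$); everything else is algebraic bookkeeping once one is inside the Edelman--Arias--Smith framework. I would also flag a terminological point: the name ``projection distance'' is used in (\ref{eq:proj_distance}) of the excerpt for $\bigl(\sum\theta_{i}^{2}\bigr)^{1/2}$, whereas earlier in the paper it denoted $\|\sin(\Theta)\|_{F}$; the quantity proved here is the geodesic (arc-length / Grassmann) distance, which coincides with the former only to first order in $\Theta$.
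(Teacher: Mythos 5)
Your argument is correct and follows the same underlying algebraic path as the paper's proof: SVD of $Q_1^\top Q_2$ to extract principal angles and rotate to aligned bases, explicit construction of $\Delta = (B - A\cos\Theta)\sin(\Theta)^{-1}$ with verification of $A^\top\Delta=0$ and $\Delta^\top\Delta=I$, the closed-form path $\gamma(t)=A\cos(t\Theta)+\Delta\sin(t\Theta)$, and a constant-speed trace computation giving length $\bigl(\sum_i\theta_i^2\bigr)^{1/2}$. Where you genuinely improve on the paper is in steps (ii)--(iv): the paper's appendix simply \emph{declares} that $\gamma(t)$ is the Grassmann geodesic and then checks endpoints and computes its length, never verifying that the curve lies in the Stiefel representative, has horizontal velocity, or agrees with the exponential map of the quotient metric on $O(n)/(O(k)\times O(n-k))$. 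Your (ii) orthonormality for all $t$, (iii) horizontality $\gamma(t)^\top\dot\gamma(t)=0$, and (iv) identification of $\dot\gamma(0)=\Delta\Theta$ with the Edelman--Arias--Smith exponential-map formula close that gap and turn the paper's assertion into an actual proof; you also handle the degenerate $\theta_i=0$ directions by a continuity remark, which the paper glosses over with a heuristic comment. Finally, your terminological flag is well taken: the quantity $\bigl(\sum_i\theta_i^2\bigr)^{1/2}$ proved equal to the geodesic length is the arc-length (chordal/geodesic) Grassmann distance, not the \emph{projection} distance $\|\sin\Theta\|_F=\bigl(\sum_i\sin^2\theta_i\bigr)^{1/2}$ that the paper defined under the same name in the main text, and the two coincide only to first order in $\Theta$; the theorem and Algorithm~\ref{alg:metric} are in fact using different formulas, which is an inconsistency in the paper rather than in your proof.
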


\begin{proof}

\textbf{Preliminaries and Definitions}

\begin{enumerate}
    \item \textbf{Grassmann Manifold $\mathrm{Gr}(k, n)$}: The set of all $k$-dimensional linear subspaces of $\mathbb{R}^n$.

    \item \textbf{Orthonormal Bases}: For a $k$-dimensional subspace $\mathcal{S} \subset \mathbb{R}^n$, an orthonormal basis is represented by an $n \times k$ matrix $Q$ with columns satisfying $Q^\top Q = I_k$, where $I_k$ is the $k \times k$ identity matrix.

    \item \textbf{Principal Angles and Vectors}: Given two subspaces $\mathcal{S}_1$ and $\mathcal{S}_2$ with orthonormal bases $Q_1$ and $Q_2$, the principal angles $0 \leq \theta_1 \leq \theta_2 \leq \dots \leq \theta_k \leq \frac{\pi}{2}$ between them are defined recursively by
    \begin{equation}
    \label{eq:principal_angles}
    \cos(\theta_i) = \max_{\substack{\mathbf{u} \in \mathcal{S}_1 \\ \|\mathbf{u}\| = 1}} \max_{\substack{\mathbf{v} \in \mathcal{S}_2 \\ \|\mathbf{v}\| = 1}} \mathbf{u}^\top \mathbf{v}, \quad \text{subject to } \mathbf{u}^\top \mathbf{u}_j = 0, \ \mathbf{v}^\top \mathbf{v}_j = 0, \ j = 1, \dots, i-1.
    \end{equation}

    \item \textbf{Projection Distance}: The projection distance between $\mathcal{S}_1$ and $\mathcal{S}_2$ is defined as
    \begin{equation}
    \label{eq:projection_distance}
    d_{\text{proj}}(\mathcal{S}_1, \mathcal{S}_2) = \left( \sum_{i=1}^k \theta_i^2 \right)^{1/2}.
    \end{equation}
\end{enumerate}

\textbf{1. Computation of the Principal Angles}

Let $Q_1 = \mathcal{M}_{k}(\mathbf{X})$ and $Q_2 = \mathcal{M}_{k}(\widetilde{\mathbf{X}})$. Both $Q_1$ and $Q_2$ are $n \times k$ matrices with orthonormal columns.

We construct the matrix $C$ as follows:

\begin{equation}
\label{eq:C_definition}
C = Q_1^\top Q_2 \in \mathbb{R}^{k \times k}.
\end{equation}
Since $Q_1^\top Q_1 = I_k$ and $Q_2^\top Q_2 = I_k$, $C$ captures the pairwise inner products between the basis vectors of $Q_1$ and $Q_2$.

We then perform the Singular Value Decomposition (SVD) of $C$:
\begin{equation}
\label{eq:SVD_C}
C = U \Sigma V^\top,
\end{equation}
where
\begin{itemize}
    \item $U, V \in \mathbb{R}^{k \times k}$ are orthogonal matrices, i.e., $U^\top U = V^\top V = I_k$.
    \item $\Sigma = \operatorname{diag}(\sigma_1, \sigma_2, \dots, \sigma_k)$ with $\sigma_i \geq 0$.
\end{itemize}

The singular values $\sigma_i$ of $C$ are the cosines of the principal angles between the subspaces:
\begin{equation}
\label{eq:singular_values}
\sigma_i = \cos(\theta_i), \quad \theta_i \in [0, \pi/2], \quad i = 1, \dots, k.
\end{equation}

This result stems from the fact that the SVD aligns the basis vectors of \(U\) and \(V\) to maximize the projections in the directions of the principal angles, which correspond to the largest cosines. 

Since principal angles \(\theta_i\) are defined in the range \([0, \pi/2]\), their cosines naturally lie in \([0, 1]\), matching the range of the singular values of \(C\). Thus, the singular values encode the geometric relationship between the subspaces \(U\) and \(V\) in terms of the principal angles. This connection is fundamental to Grassmannian geometry, as it allows the distances and align*ments between subspaces to be analyzed using the principal angles and their cosines.

\begin{theorem}
    
For a symmetric matrix $A \in \mathbb{R}^{n \times n}$, the largest eigenvalue $\lambda_1(A)$ is given by:
\begin{equation}
\lambda_1(A) = \max_{x \neq 0} \frac{x^\top A x}{x^\top x} = \max_{|x| = 1} x^\top A x
\end{equation}
Furthermore, if $\lambda_1 \geq \lambda_2 \geq \cdots \geq \lambda_n$ are the eigenvalues of $A$, then for any $k$:
\begin{equation}
\lambda_k = \max_{\substack{x \neq 0 \ x \perp v_1,\ldots,v_{k-1}}} \frac{x^\top A x}{x^\top x}
\end{equation}
where $v_1,\ldots,v_{k-1}$ are eigenvectors corresponding to $\lambda_1,\ldots,\lambda_{k-1}$.

\end{theorem}

\textbf{2. Construction of Orthonormal Bases aligned with Principal Directions}

Define new orthonormal bases:
\begin{equation}
\label{eq:A_B_definition}
A = Q_1 U, \quad B = Q_2 V.
\end{equation}

\textbf{Verification of Orthonormality:}
\begin{align*}
A^\top A &= (Q_1 U)^\top (Q_1 U) = U^\top Q_1^\top Q_1 U = U^\top I_k U = U^\top U = I_k, \\
B^\top B &= (Q_2 V)^\top (Q_2 V) = V^\top Q_2^\top Q_2 V = V^\top I_k V = V^\top V = I_k.
\end{align*}

We then compute $A^\top B$:

\begin{align*}
\label{eq:A_transpose_B}
A^\top B &= (Q_1 U)^\top (Q_2 V) = U^\top Q_1^\top Q_2 V = U^\top C V = U^\top (U \Sigma V^\top) V \\
&= U^\top U \Sigma V^\top V = I_k \Sigma I_k = \Sigma.
\end{align*}
Thus, $A^\top B = \Sigma = \operatorname{diag}(\cos(\theta_1), \dots, \cos(\theta_k))$.

\textbf{3. Decomposition of $B$ in Terms of $A$ and $\Delta$}

We aim to express $B$ as a linear combination of $A$ and another orthonormal matrix $\Delta$ that is orthogonal to $A$.

Let us define $\Delta$:
\begin{equation}
\label{eq:Delta_definition}
\Delta = (B - A \cos(\Theta)) \sin(\Theta)^{-1},
\end{equation}
where $\cos(\Theta) = \Sigma$ and $\sin(\Theta) = \operatorname{diag}(\sin(\theta_1), \dots, \sin(\theta_k))$, and $\sin(\Theta)^{-1}$ denotes the diagonal matrix with entries $\sin(\theta_i)^{-1}$.

Note that $\sin(\theta)^{-1}$ is well defined because $\sin(\theta) = 0$ if and only if $\theta = 0$, and when this occurs, the corresponding directions are already perfectly aligned with eigenvalue $\cos(\theta) = 1$, indicating no oversmoothing along that direction. Therefore, we can safely ignore these cases during our computations when applying the SVD and only consider angles $\theta > 0$, where $\sin(\theta)^{-1}$ exists and provides meaningful information about the degree of misalignment.

\textbf{Verification that $\Delta$ is Orthogonal to $A$}:
\begin{align*}
A^\top \Delta &= A^\top (B - A \cos(\Theta)) \sin(\Theta)^{-1} \\
&= (A^\top B - A^\top A \cos(\Theta)) \sin(\Theta)^{-1} \\
&= (\Sigma - I_k \cos(\Theta)) \sin(\Theta)^{-1} \\
&= (\cos(\Theta) - \cos(\Theta)) \sin(\Theta)^{-1} = 0.
\end{align*}

\textbf{Verification that $\Delta$ is Orthonormal}:

First, we compute $\Delta^\top \Delta$:
\begin{align*}
\Delta^\top \Delta &= \left( (B - A \cos(\Theta)) \sin(\Theta)^{-1} \right)^\top \left( (B - A \cos(\Theta)) \sin(\Theta)^{-1} \right) \\
&= \sin(\Theta)^{-1} (B - A \cos(\Theta))^\top (B - A \cos(\Theta)) \sin(\Theta)^{-1}.
\end{align*}

We compute the inner term:
\begin{align*}
(B - A \cos(\Theta))^\top (B - A \cos(\Theta)) &= (B^\top - \cos(\Theta) A^\top)(B - A \cos(\Theta)) \\
&= B^\top B - B^\top A \cos(\Theta) - \cos(\Theta) A^\top B \\
&+ \cos(\Theta) A^\top A \cos(\Theta).
\end{align*}

Since $A^\top A = I_k$, $B^\top B = I_k$, and $A^\top B = \Sigma = \cos(\Theta)$:
\begin{align*}
(B - A \cos(\Theta))^\top (B - A \cos(\Theta)) &= I_k - \cos(\Theta)^\top \cos(\Theta) - \cos(\Theta)^\top \cos(\Theta) \\
&+ I_k \cos(\Theta)^\top \cos(\Theta) \\
&= I_k - \cos^2(\Theta) - \cos^2(\Theta) + \cos^2(\Theta) \\
&= I_k - 2 \cos^2(\Theta) + \cos^2(\Theta).
\end{align*}
But since $\sin^2(\Theta) = I_k - \cos^2(\Theta)$, we can write:
\begin{align*}
\Delta^\top \Delta &= \sin(\Theta)^{-1} \sin^2(\Theta) \sin(\Theta)^{-1} = I_k.
\end{align*}
Therefore, $\Delta$ is orthonormal.

\textbf{Expressing $B$ in Terms of $A$ and $\Delta$}:

Using Equation \eqref{eq:Delta_definition}, we have:
\begin{align*}
B &= A \cos(\Theta) + \Delta \sin(\Theta).
\end{align*}

\textbf{4. Define the Geodesic Path}

On the Grassmann manifold, the geodesic $\gamma(t)$ from $A$ to $B$ is given by:
\begin{equation}
\label{eq:geodesic_definition}
\gamma(t) = A \cos(t\Theta) + \Delta \sin(t\Theta), \quad t \in [0,1].
\end{equation}

\textbf{Verification of Endpoints}:

At $t = 0$:
\begin{align*}
\gamma(0) &= A \cos(0 \cdot \Theta) + \Delta \sin(0 \cdot \Theta) = A I_k + \Delta \cdot 0 = A.
\end{align*}

At $t = 1$:
\begin{align*}
\gamma(1) &= A \cos(\Theta) + \Delta \sin(\Theta) = B.
\end{align*}

Thus, $\gamma(t)$ is a continuous path on $\mathrm{Gr}(k, n)$ connecting $A$ and $B$.

\textbf{Relate Back to Original Bases}:

Recall that $A = Q_1 U = \mathcal{M}_{k}(\mathbf{X}) U$ and $B = Q_2 V = \mathcal{M}_{k}(\widetilde{\mathbf{X}}) V$.

Since $U$ and $V$ are orthogonal matrices, the subspaces spanned by $Q_1$ and $A$, and by $Q_2$ and $B$, are identical. Therefore, we can express the geodesic in terms of $\mathcal{M}_{k}(\mathbf{X})$ and $\Delta$.

\textbf{Expressing the Geodesic in Original Terms}:

Let us redefine $\Delta$ accordingly to absorb $U$ and $V$, so that we can write:
\begin{equation}
\gamma(t) = \mathcal{M}_{k}(\mathbf{X}) \cos(t\Theta) + \Delta \sin(t\Theta).
\end{equation}

\textbf{5. Compute the Length of the Geodesic}

The length $L$ of the geodesic $\gamma(t)$ is given by:
\begin{equation}
\label{eq:geodesic_length}
L = \int_{0}^{1} \left\| \dot{\gamma}(t) \right\|_F \, dt,
\end{equation}
where $\left\| \cdot \right\|_F$ denotes the Frobenius norm.

\textbf{Compute the Derivative $\dot{\gamma}(t)$}:

Since $\gamma(t) = \mathcal{M}_{k}(\mathbf{X}) \cos(t\Theta) + \Delta \sin(t\Theta)$, we have:
\begin{align*}
\dot{\gamma}(t) &= -\mathcal{M}_{k}(\mathbf{X}) \Theta \sin(t\Theta) + \Delta \Theta \cos(t\Theta),
\end{align*}
where we used the fact that the derivative of $\cos(t\Theta)$ with respect to $t$ is $-\Theta \sin(t\Theta)$, and similarly for $\sin(t\Theta)$.

\textbf{Compute the Squared Norm $\left\| \dot{\gamma}(t) \right\|_F^2$}:

\begin{align*}
\left\| \dot{\gamma}(t) \right\|_F^2 &= \operatorname{Tr} \left( \dot{\gamma}(t)^\top \dot{\gamma}(t) \right) \\
&= \operatorname{Tr} \left( \left( -\mathcal{M}_{k}(\mathbf{X}) \Theta \sin(t\Theta) + \Delta \Theta \cos(t\Theta) \right)^\top \left( -\mathcal{M}_{k}(\mathbf{X}) \Theta \sin(t\Theta) + \Delta \Theta \cos(t\Theta) \right) \right) \\
&= \operatorname{Tr} \left( \Theta^2 \left( \sin^2(t\Theta) \mathcal{M}_{k}(\mathbf{X})^\top \mathcal{M}_{k}(\mathbf{X}) + \cos^2(t\Theta) \Delta^\top \Delta - \sin(t\Theta) \cos(t\Theta) \left( \mathcal{M}_{k}(\mathbf{X})^\top \Delta - \Delta^\top \mathcal{M}_{k}(\mathbf{X}) \right) \right) \right).
\end{align*}

Since $\mathcal{M}_{k}(\mathbf{X})^\top \mathcal{M}_{k}(\mathbf{X}) = I_k$, $\Delta^\top \Delta = I_k$, and $\mathcal{M}_{k}(\mathbf{X})^\top \Delta = 0$, the cross terms vanish, and we have:
\begin{align*}
\left\| \dot{\gamma}(t) \right\|_F^2 &= \operatorname{Tr} \left( \Theta^2 \left( \sin^2(t\Theta) I_k + \cos^2(t\Theta) I_k \right) \right) \\
&= \operatorname{Tr} \left( \Theta^2 I_k \right) \\
&= \sum_{i=1}^k \theta_i^2.
\end{align*}

\textbf{Compute the Length $L$}:

Since $\left\| \dot{\gamma}(t) \right\|_F$ is constant with respect to $t$, we have:
\begin{align*}
L &= \int_{0}^{1} \left\| \dot{\gamma}(t) \right\|_F \, dt = \left\| \dot{\gamma}(t) \right\|_F \int_{0}^{1} dt \\
&= \left( \sum_{i=1}^k \theta_i^2 \right)^{1/2} \cdot 1 \\
&= \left( \sum_{i=1}^k \theta_i^2 \right)^{1/2}.
\end{align*}

\textbf{6. Length Equals the Projection Distance}

Comparing the computed length $L$ with the projection distance defined in Equation \eqref{eq:projection_distance}, we find:
\begin{equation}
L = d_{\text{proj}}(\mathcal{M}_{k}(\mathbf{X}), \mathcal{M}_{k}(\widetilde{\mathbf{X}})) = \left( \sum_{i=1}^k \theta_i^2 \right)^{1/2}.
\end{equation}


On the Grassmann manifold, the geodesic distance between two subspaces is given by the length of the shortest path connecting them. This distance is intrinsically linked to the principal angles between the subspaces. The projection distance quantifies the separation between subspaces in terms of these principal angles.

By computing the squared norm of the derivative of the geodesic, we find that it equals the sum of the squares of the principal angles, which is the squared projection distance. Since the derivative's norm is constant, the total length of the geodesic over the interval $t \in [0, 1]$ is precisely the projection distance.

Therefore, the length of the geodesic $\gamma(t)$ connecting $\mathcal{M}_{k}(\mathbf{X})$ and $\mathcal{M}_{k}(\widetilde{\mathbf{X}})$ on the Grassmann manifold equals the projection distance between these two subspaces.

This completes the proof of Theorem \ref{Geodesic_Equation}.

\end{proof}

\section{Datasets and Implementation Details}\label{app_dataset_implementation}
\subsection{Basic Statistics on the Datasets}
In Table \ref{tab:data_statistics}, we present some basic statistics on the used datasets.
\begin{table}[h]
\caption{Statistics of the four datasets used in our experiments.}
\label{tab:data_statistics}
\vskip 0.15in
\begin{center}
\begin{small}
\begin{sc}
\begin{tabular}{lcccc}
\toprule
Dataset & Sine & Stock & Energy & ETTh \\
\midrule
\#Samples  & 10,000  &  3,773 &  19,711 & 17,420 \\
Dimension  & 5 & 6 &  28& 8 \\
\bottomrule
\end{tabular}
\end{sc}
\end{small}
\end{center}
\vskip -0.1in
\end{table}

\subsection{Implementation Details}

The experiments were conducted on an NVIDIA A100 GPU. We utilized the pyDMD package \footnote{\href{https://pydmd.github.io/PyDMD/}{https://pydmd.github.io/PyDMD/} }  in Python to compute the DMD eigenvalues and eigenvectors. For generating synthetic time series, we used the original settings and the official implementation of DiffusionTS\footnote{\href{https://github.com/Y-debug-sys/Diffusion-TS}{https://github.com/Y-debug-sys/Diffusion-TS} }, TimeGAN\footnote{\href{https://github.com/Y-debug-sys/Diffusion-TS}{https://github.com/Y-debug-sys/Diffusion-TS} } and TimeVAE\footnote{\href{https://github.com/zzw-zwzhang/TimeGAN-pytorch}{https://github.com/zzw-zwzhang/TimeGAN-pytorch} }.

\section{Synthetic Generators}\label{app_synthetic_generator}
\begin{figure}[ht]
    \centering
    \includegraphics[width=\linewidth]{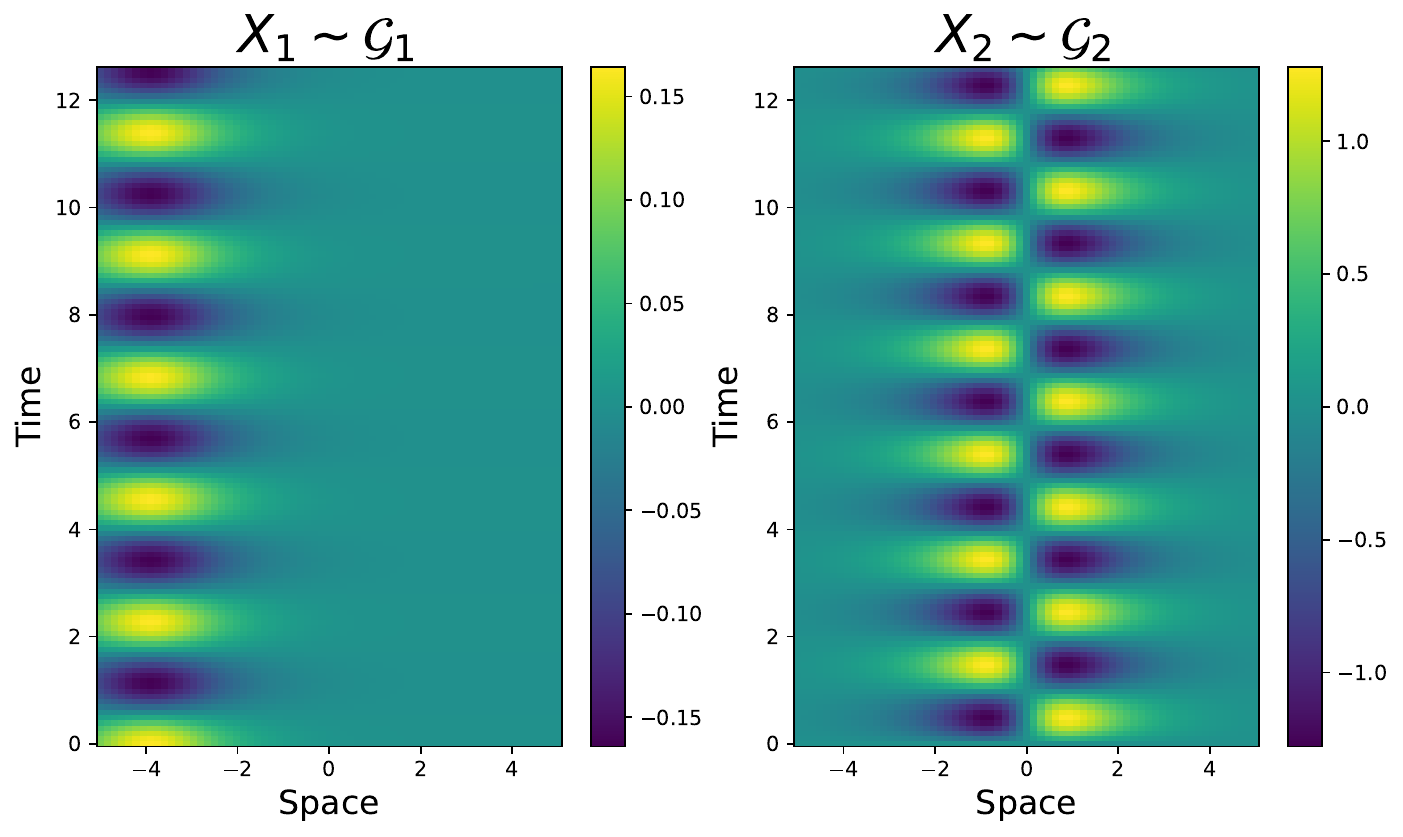}
    \caption{Examples of time series
generated using the generators $\mathcal{G}_1$ and $\mathcal{G}_2$.}
    \label{fig:generators}
\end{figure}

\section{Evolution of the DMD eigenvalues During Training}\label{app:eig_DMD} 
In Figures \ref{fig:DMD_energy}, \ref{fig:DMD_ETTH}, \ref{fig:DMD_stock}, and \ref{fig:DMD_sines}, we plot the imaginary and real parts of the DMD eigenvalues of a 500 sample original and generated time series for  each dataset.

\begin{figure}[ht]
    \centering
    \includegraphics[width=\textwidth]{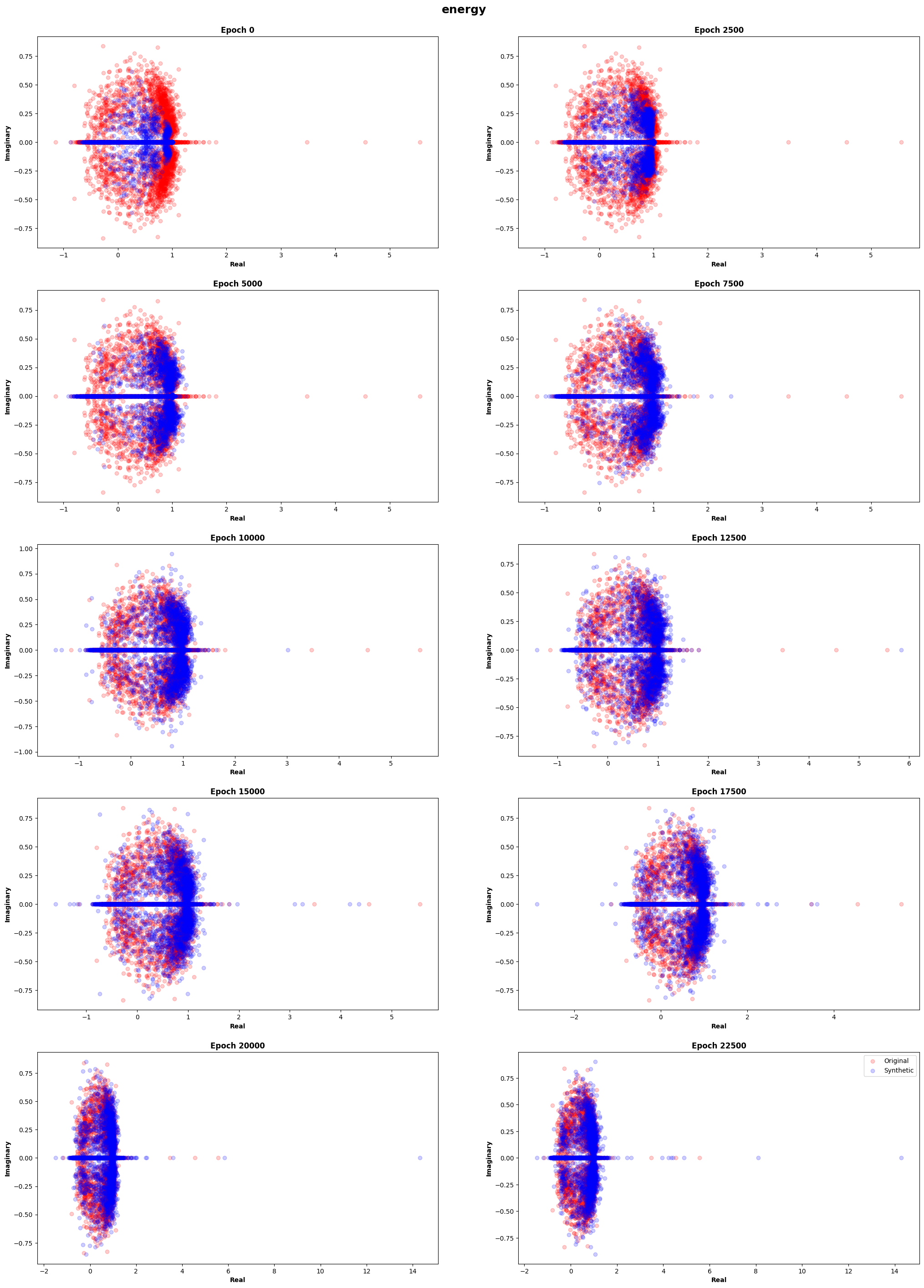}
    \caption{Comparison of DMD Eigenvalues between Original and Generated Time Series for DiffusionTS through Epochs on the dataset Energy.}
    \label{fig:DMD_energy}
\end{figure}

\begin{figure}[ht]
    \centering
    \includegraphics[width=\textwidth]{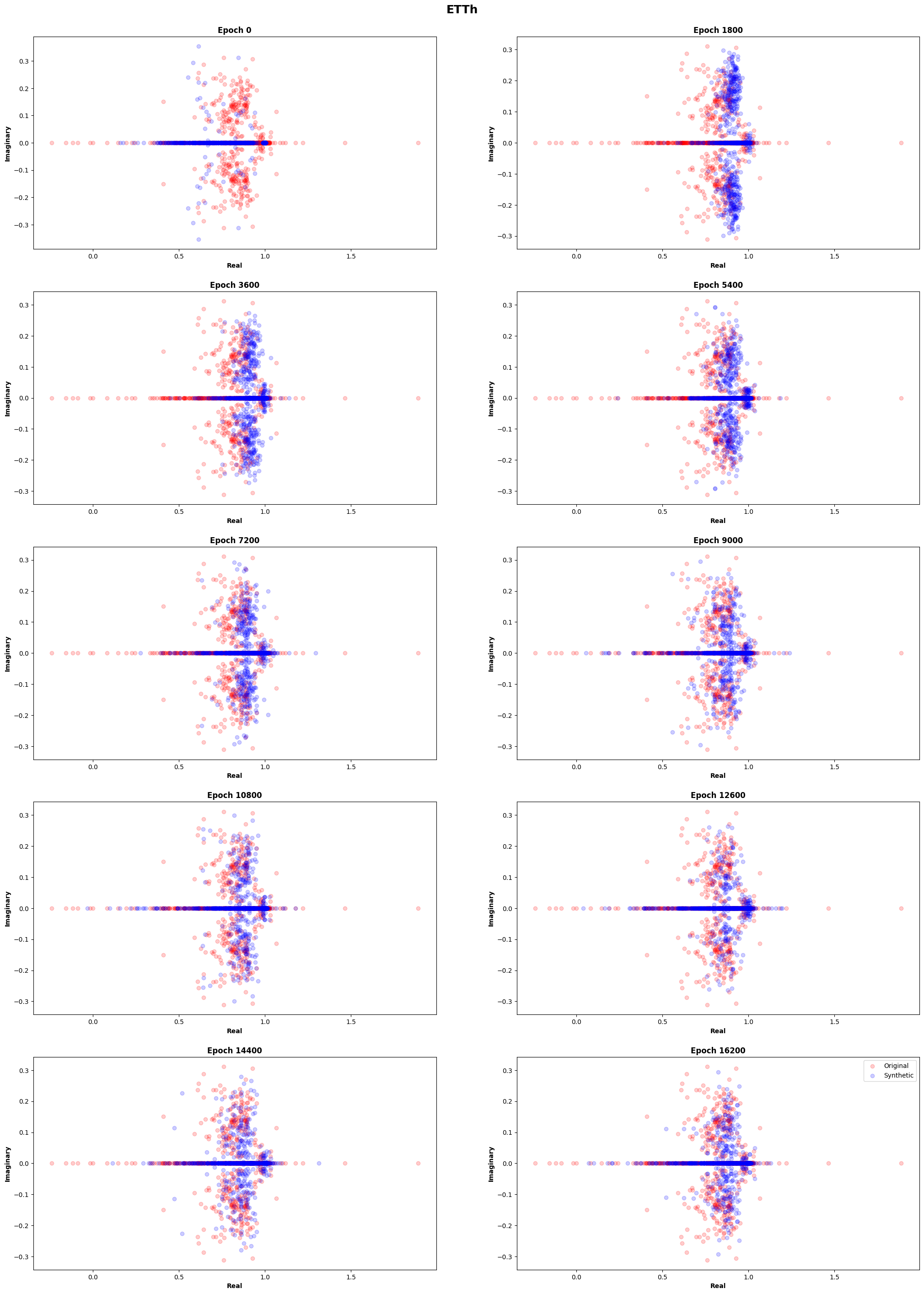}
    \caption{Comparison of DMD Eigenvalues between Original and Generated Time Series for DiffusionTS through Epochs on the dataset ETTh.}
    \label{fig:DMD_ETTH}
\end{figure}

\begin{figure}[ht]
    \centering
    \includegraphics[width=\textwidth]{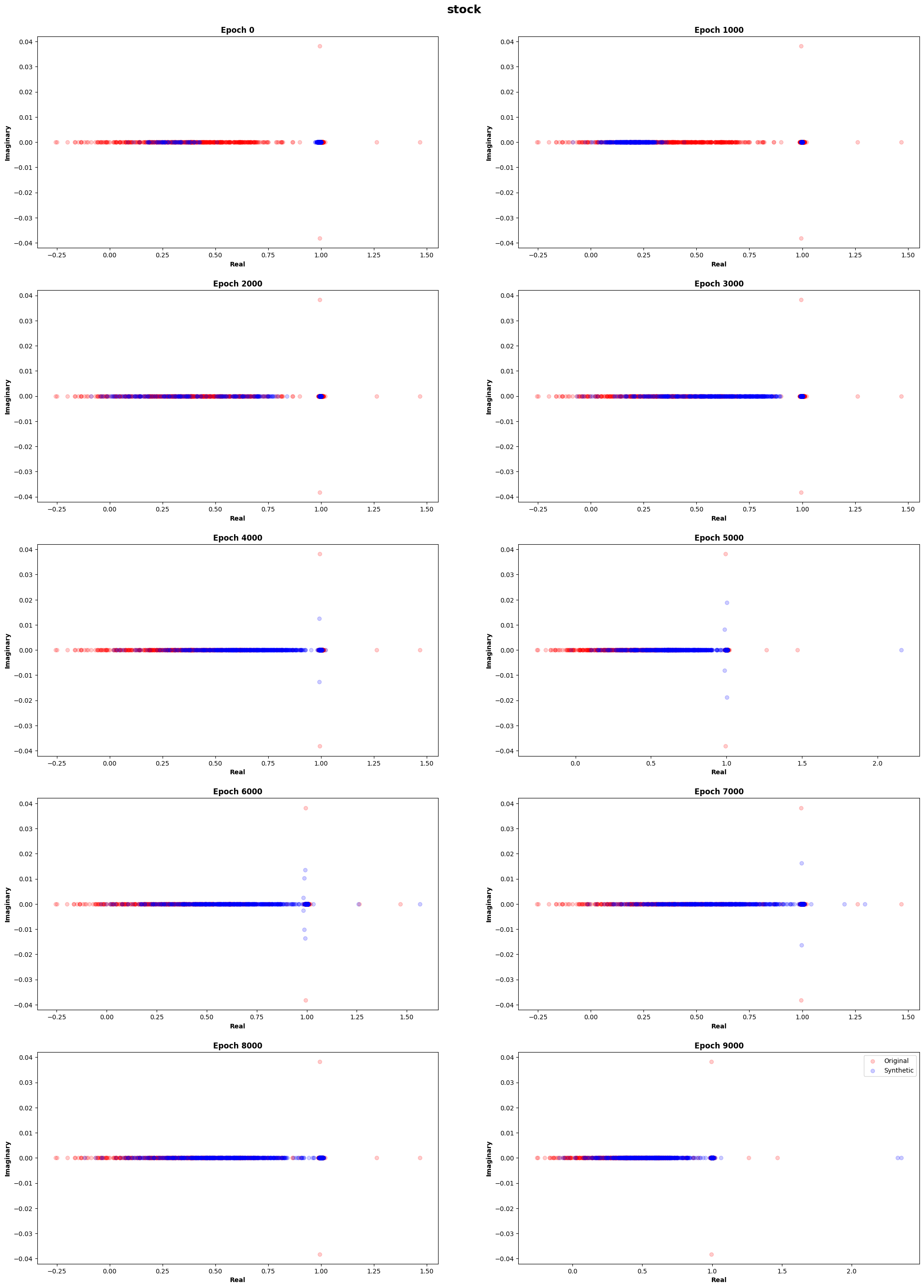}
    \caption{Comparison of DMD Eigenvalues between Original and Generated Time Series for DiffusionTS through Epochs on the dataset Stock.}
    \label{fig:DMD_stock}
\end{figure}

\begin{figure}[ht]
    \centering
    \includegraphics[width=\textwidth]{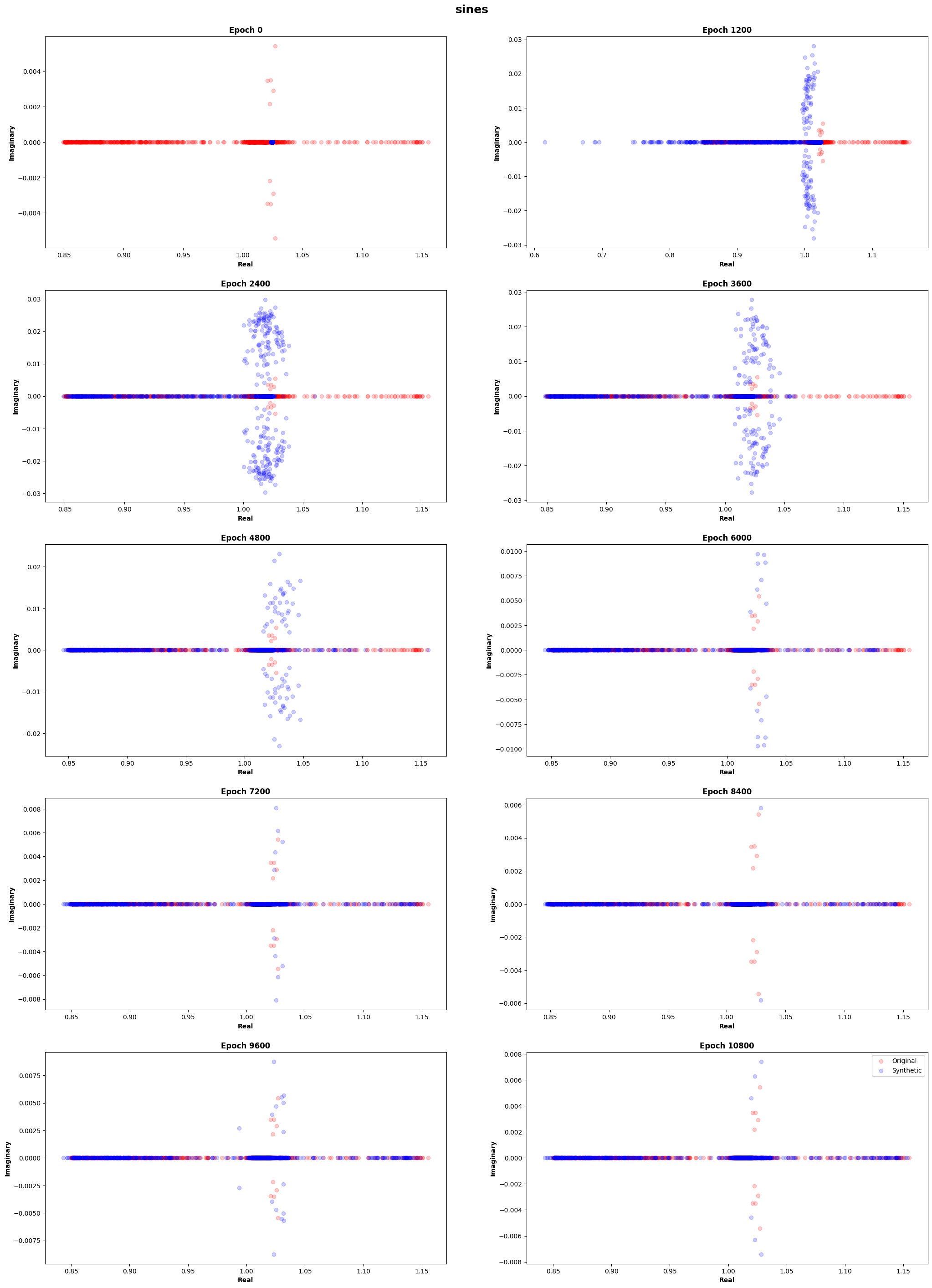}
    \caption{Comparison of DMD Eigenvalues between Original and Generated Time Series for DiffusionTS through Epochs on the dataset Sines.}
    \label{fig:DMD_sines}
\end{figure}

\end{document}